\def\eqref#1{equation~\ref{#1}}
\def\1{\bm{1}}
\DeclareMathAlphabet{\mathsfit}{\encodingdefault}{\sfdefault}{m}{sl}
\SetMathAlphabet{\mathsfit}{bold}{\encodingdefault}{\sfdefault}{bx}{n}
\newcommand{\RB}[1]{#1}
\newtheorem*{remark}{Remark}
\newtheorem{theorem}{Theorem}
\newtheorem{prop}{Proposition}
\newtheorem{lemma}[theorem]{Lemma}
\newcommand{\norm}[1]{\left\lVert#1\right\rVert}
\newcommand{\phix}{\phi(\mathbf{x})}
\newcommand{\bmu}{\boldsymbol{\upmu}}
\newcommand{\EXP}[2]{\mathbb{E}_{#1}[#2]}
\newcommand{\Proto}[1]{\overline{\phi(S_{#1})}}
\newcommand{\ie}{\textit{i.e.}\xspace}
\newcommand{\eg}{\textit{e.g.}\xspace}
\title{A Theoretical Analysis of the Number of Shots in Few-Shot Learning}
\author{Tianshi Cao$^{1,2}$, Marc T. Law$^{1,2,3}$, Sanja Fidler$^{1,2,3}$\\ 
$^1$ Department of Computer Science, University of Toronto\\
$^2$ Vector Institute\\
$^3$ NVIDIA\\
\texttt{\{jcao, law, fidler\}@cs.toronto.edu}
}
\begin{document}

\maketitle

\begin{abstract}
    Few-shot classification is the task of predicting the category of an example from few labeled examples. The number of labeled examples per category is called the number of shots (or shot number). Recent works tackle this task through meta-learning, where a meta-learner extracts information from observed tasks during meta-training to quickly adapt to new tasks during meta-testing. In this formulation, the number of shots exploited during meta-training has an impact on the recognition performance at meta-test time. Generally, the shot number used in meta-training should match the one used in meta-testing to obtain the best performance. We introduce a theoretical analysis of the impact of the shot number on Prototypical Networks, a state-of-the-art few-shot classification method. From our analysis, we propose a simple method that is robust to the choice of shot number used during meta-training, which is a crucial hyperparameter. The performance of our model trained for an arbitrary meta-training shot number shows great performance for different values of meta-testing shot numbers. We experimentally demonstrate our approach on different few-shot classification benchmarks.
\end{abstract}

\section{Introduction}

Human cognition has the impressive ability of grasping new concepts from exposure to a handful of examples \citep{Yger13351}. In comparison, while modern deep learning methods achieve unprecedented performances with very deep neural-networks \citep{resnet, inception}, they require extensive amounts of data to train, often ranging in the millions. Few-shot learning aims to bridge the sample-efficiency gap between deep learning and human learning in fields such as computer vision, reinforcement learning and speech recognition \citep{MANN, metalstm, maml, matchnet,wang2019centroid}. These methods fall under the framework of meta-learning, in which a meta-learner extracts knowledge from many related tasks (in the meta-training phase) and leverages that knowledge to quickly learn new tasks (in the meta-testing phase). In this paper, we focus on the few-shot classification problem where each task is defined as a $N$-way classification problem with $k$ samples (shots) per class available for training. 

Many meta-learning methods use the episodic training setup in which the meta-learner iterates through episodes in the meta-training phase. In each episode, a task is drawn from some population and a limited amount of support and query data from that task is made available. The meta-learner then learns a task-specific classifier on the support data and the classifier predicts on the query data. Updates to the meta-learner is computed based on the performance of the classifier on the query set. Evaluation of the meta-learner (during a phase called meta-testing) is also carried out in episodes in a similar fashion, except that the meta-learner is no longer updated and the performance on query data across multiple episodes is aggregated.

In the episodic setup, the selection of $k$ during meta-training time can have significant effects on the learning outcomes of the meta-learner. Intuitively, if support data is expected to be scarce, the meta-learner needs to provide strong inductive bias to the task-specific learner as the danger of overfitting is high. In contrast, if support data is expected to be abundant, then the meta-learner can provide generally more relaxed biases to the task-specific learner to achieve better fitting to the task data. Therefore it is plausible that a meta-learner trained with one $k$ value can be suboptimal at adapting to tasks with a different $k$ value and thus exhibit “meta-overfitting” to $k$.
In experiments, $k$ is often simply kept fixed between meta-training and meta-testing, but in real-world usage, one cannot expect to know beforehand the amount of support data from unseen tasks during deployment.

In this paper we will focus on Prototypical networks \citep{protonet}, a.k.a. ProtoNet. ProtoNet is of practical interest because of its flexibility: a single trained instance of ProtoNet can be used on new tasks with any $k$ and $N$. However, ProtoNet exhibits performance degradation when the $k$ used in training does not match the $k$ used in testing.\footnote{For example, 1-shot accuracy of a 10-shot trained network performs $9\%$ (absolute) worse than the 1-shot trained network} First, we will undertake a theoretical investigation to elicit the connection from $k$ to a lower bound of expected performance, as well as to the intrinsic dimension of the learned embedding space. Then, we conduct experiments to empirically verify our theoretical results across various settings. Guided by our new understanding of the effects of $k$, we propose an elegant method to tackle performance degradation in mismatched $k$ cases. 
Our contributions are threefold:
	
$\bullet$ We provide performance bounds for ProtoNets given an embedding function. From which, we argue that $k$ affects learning and performance by scaling the contribution of intra-class variance.
	
$\bullet$ Through VC-learnability theory, we connect the value of $k$ used in meta-training to the intrinsic dimension of the embedding space.
	
$\bullet$ The most important contribution of this paper (introduced in Section \ref{sec:reconciling}) is a new method that improves upon vanilla ProtoNets by eliminating the performance degradation in cases where the $k$ is mismatched between meta-training and meta-testing. Our evaluation protocol more closely adheres to real-world scenarios where the model is exposed to different numbers of training samples.

\section{Background}
\subsection{Problem setup}\label{setup}
The few-shot classification problem considered in this paper is set up as described below. Consider a space of classes $\mathit{C}$ with a probability distribution $\tau$, $N$ classes $\mathbf{c} = \{c_1, ... , c_N\}$ are sampled i.i.d. from $\tau$ to form a $N$-way classification problem. 
For each class $c_i$, $k$ support data are sampled from class-conditional distribution $S_i = \{{}_{s}\mathbf{x}_{1}, .., {}_{s}\mathbf{x}_{k}\} \stackrel{iid}{\sim} P(\mathbf{x}| Y(\mathbf{x}) = c_i)$, where $\mathbf{x} \in \mathbb{R}^D$, $D$ denotes the dimension of data, and $Y(\mathbf{x})$ denotes the class assignment of $\mathbf{x}$. Note that we assume that $Y(\mathbf{x})$ is singular (e.g. each $\mathbf{x}$ can only have 1 label), and does not depend on $N$ (e.g. a data point with a label ``cat'' will always have the label ``cat''), in contrast to $y$ defined below.

Additionally, the set $Q = \{{}_{q}\mathbf{x}_{1},...,{}_{q}\mathbf{x}_{l}\}$ containing $l$ query data is sampled from the joint distribution $\frac{1}{N} \sum_{i=1}^N P(\mathbf{x}|c_i)$.\footnote{We assume equal likelihood of drawing from each class for simplicity}
For each $\mathbf{x}$, let $y \in \{ 1,...,N \}$ denote its label in the context of the few-shot classification task. 
Define $\hat{S_i} = \{({}_{s}\mathbf{x}_{1}, y_1=i),...,({}_{s}\mathbf{x}_{k}, y_k=i)\}$ as the augmented set of supports for class $c_i$, and denote the union of support sets for $N$ classes as $S = \bigcup_{i=1}^N \hat{S_i}$.
The few-shot classification task is to predict $y$ for each $\mathbf{x}$ in $Q$ given $S$. During meta-training, the ground truth label for $Q$ is also available to the learner. 
\subsection{Meta-learning setup} 
Meta-learning approaches train on a distribution of tasks to obtain information that generalizes to unseen tasks. For few-shot classification, a task is determined by which classes are involved in the $N$-way classification task. During meta-training, the meta-learner observes episodes of few-shot classification tasks consisting of $N$ classes, $k$ labelled samples per class, and $l$ unlabelled samples, as previously described. The collection of all classes observed during meta-training forms the meta-training split $\mathcal{D}_{tr}=\{{}_{tr}c_{1}, ... , {}_{tr}c_{R}\}$. Critically, we assume that every unseen class that the learner is evaluated upon (during meta-testing) is also drawn from the same distribution $\tau$.

\subsection{Prototypical Networks}
ProtoNets \citep{protonet} compute $E$-dimensional embeddings for all samples in $S$ and $Q$. The embedding function $\phi: \mathbb{R}^D \rightarrow \mathbb{R}^E $ is usually a deep network.The prototype representation for each class is formed by averaging the embeddings for all supports of said class:
$
\overline{\phi(S_i)} = \frac{1}{k}\sum_{\mathbf{x}\in S_i} \phi(\mathbf{x})
$.
Classification of any input $\mathbf{x}$ (e.g. $\mathbf{x} \in Q$) is performed by computing the softmax over squared Euclidean distances of the input point's embedding to the prototypes. Let $\hat{y}$ denote the prediction of the classifier for one of the categories $j \in \{ 1, \cdots, N \}$:
\begin{equation}
p_\phi(\hat{y}=j|\mathbf{x}, S) = \frac{e^{-\norm{\phi(\mathbf{x})-\overline{\phi(S_j)}}^2}}{\sum^N_{i=1} e^{-\norm{\phi(\mathbf{x})-\overline{\phi(S_i)}^2}}} \quad \textrm{, where} \quad \norm{\mathbf{v}}^2 = \sum_{d=1}^E v_d^2 
\end{equation}
The parameters of the embedding functions are learned through meta-training. Negative log-likelihood $J(\phi) = -\log \left( p(\hat{y}=y|\mathbf{x}) \right)$ of the correct class $y$ is minimized on the \emph{query} points through SGD. 

As explained in \citep{DRPR}, ProtoNets can be seen as a metric learning approach optimized for the \textit{supervised hard clustering} task \citep{law2016closed}. The model $\phi$ is learned so that the representations of similar examples (i.e. belonging to a same category) are all grouped into the same cluster in $\mathbb{R}^E$. 
We propose in this paper a subsequent metric learning step which learns a linear transformation that maximizes inter-to-intra class variance ratio.

\section{Proposed method}
We first present theoretical results explaining the effect of the shot number on ProtoNets, and then introduce our method for addressing performance degradation in cases of mismatched shots.
\subsection{Relating \textit{k} to lower bound of expected accuracy}\label{secns}
To better understand the role of $k$ on the performance of ProtoNets, we study how it contributes to the expected accuracy across episodes when using any kind of fixed embedding function (\eg the embedding function obtained at the end of the meta-training phase). With $\mathit{I}$ denoting the indicator function, we define the expected accuracy $R$ as:
\begin{equation}\label{R1}
R(\phi) = \mathbb{E}_{\mathbf{c}} \mathbb{E}_{S, \mathbf{x}, y}\mathit{I}[ \RB{\underset{j}{\arg\max}} \: \{p_\phi(\hat{y}=j|\mathbf{x},S)\}=y]
\end{equation}
\textbf{Definitions:} Throughout this section, we will use the following symbols to denote the means and variances of embeddings under different expectations:
\begin{align*}
\boldsymbol{\upmu}_c &\triangleq \mathbb{E}_{\mathbf{x}}[\phi(\mathbf{x})|\ Y(\mathbf{x})=c] & 
\Sigma_c &\triangleq \mathbb{E}_\mathbf{x}[(\phi(\mathbf{x}) - \boldsymbol{\upmu}_c)(\phi(\mathbf{x}) - \boldsymbol{\upmu}_c)^T|\ Y(\mathbf{x}) = c] \\
\boldsymbol{\upmu} &\triangleq \mathbb{E}_{c}[\boldsymbol{\upmu}_c] & \Sigma &\triangleq \mathbb{E}_{c}[(\boldsymbol{\upmu}_c - \boldsymbol{\upmu})(\boldsymbol{\upmu}_c - \boldsymbol{\upmu})^T]
\end{align*}
\begin{remark}
    $\boldsymbol{\upmu}_c$ is the expectation of the embedding conditioned on class $c$. 
    $\boldsymbol{\upmu}$ is the (full) expectation of the embedding, which can be expressed as the expectation of $\boldsymbol{\upmu}_c$ over classes.
	$\Sigma$ is the variance of class means in the embedding space - it can be interpreted as the signal of the input to the classifier, as larger $\Sigma$ implies larger distances between classes.
	$\Sigma_c$ is the expected intra-class variance - it represents the noise in the above signal.
\end{remark}

\textbf{Modelling assumptions of ProtoNets:} The use of the squared Euclidean distance and softmax activation in ProtoNets implies that classification with ProtoNets is equivalent to a mixture density estimation on the support set with spherical Gaussian densities \citep{protonet}. Specifically, we adopt the modelling assumptions that the distribution of $\phi(\mathbf{x})$ given any class assignment is normally distributed  ($p(\phi(\mathbf{x})|Y(\mathbf{x})=c) = \mathcal{N}(\boldsymbol{\upmu}_c, \Sigma_c)$), with equal covariance for all classes in the embedding space ($\forall (c, c'), \Sigma_c = \Sigma_{c'}$)\footnote{The second assumption is more general than the one used in the original paper as we do not require $\Sigma$ to be diagonal}. 

We present the analysis for the special case of episodes with binary classification (\ie with $N=2$) for ease of presentation, but the conclusion can be generalized to arbitrary $N>2$ (see appendix). Also, as noted in Section \ref{setup}, we assume equal likelihood between the classes. 
We would like to emphasize that the assignment of labels can be permuted freely and the classifier's prediction would not be affected due to symmetry. Hence, we only need to consider one case for the ground truth label without loss of generality. Let $a$ and $b$ denote any pair of classes sampled from $\tau$. Let $\mathbf{x}$ be drawn from $a$, and overload $a$ and $b$ to also indicate the ground truth label in the context of that episode, then \eqref{R1} can be written as:
\begin{equation}\label{R2}
R(\phi) = \mathbb{E}_{a, b \sim \tau} \mathbb{E}_{\mathbf{x}, S}\mathit{I}[\hat{y}=a]
\end{equation}
Additionally, noting that $p(\hat{y}=a)$ can be expressed as a sigmoid function $\sigma$:
\begin{align*}
p(\hat{y}=a|\mathbf{x}) &= \frac{e^{-\norm{\phi(\mathbf{x})-\overline{\phi(S_a)}}^2}}{e^{-\norm{\phi(\mathbf{x})-\overline{\phi(S_{b})}}^2}+e^{-\norm{\phi(\mathbf{x})-\overline{\phi(S_a)}}^2}}
= \sigma(\norm{\phi(\mathbf{x})-\overline{\phi(S_{b})}}^2-\norm{\phi(\mathbf{x})-\overline{\phi(S_a)}}^2) \\
&= \sigma(\alpha) \quad \textrm{, where} \quad \alpha \triangleq \norm{\phi(\mathbf{x})-\overline{\phi(S_{b})}}^2-\norm{\phi(\mathbf{x})-\overline{\phi(S_a)}}^2\end{align*}
We can express \eqref{R2} as a probability:
\begin{equation}
R(\phi) ~ \RB{= \Pr_{a,b,\mathbf{x},S}(\hat{y}=a)} = \Pr_{a,b,\mathbf{x},S}(\alpha > 0)
\end{equation}
We will introduce a few auxiliary results before stating the main result for this section. 
\begin{prop}\label{prop1}
	From the \RB{one-sided} Chebyshev's inequality, it immediately follows that:
	\begin{equation}
	R(\phi) = \Pr(\alpha > 0) \geq \frac{\mathbb{E}[\alpha]^2}{\mathrm{Var}(\alpha) + \mathbb{E}[\alpha]^2}
	\end{equation}
\end{prop}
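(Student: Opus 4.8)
The plan is to apply the one-sided Chebyshev (Cantelli) inequality, in the form $\Pr(X - \mathbb{E}[X] \leq -t) \leq \mathrm{Var}(X)/(\mathrm{Var}(X) + t^2)$ valid for every $t > 0$, to the \emph{complement} of the target event. The first step is to write $\Pr(\alpha > 0) = 1 - \Pr(\alpha \leq 0)$, which is exact because the strict event $\{\alpha > 0\}$ and the weak event $\{\alpha \leq 0\}$ partition the sample space. I would then center the complementary event on its mean, rewriting $\Pr(\alpha \leq 0) = \Pr\!\left(\alpha - \mathbb{E}[\alpha] \leq -\mathbb{E}[\alpha]\right)$ so that it matches the left-hand side of Cantelli's inequality.

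The second step is the substitution $t = \mathbb{E}[\alpha]$. This is where the one hypothesis of the argument enters: we must be in the regime $\mathbb{E}[\alpha] > 0$ so that $t > 0$ and Cantelli applies. This is exactly the informative regime, since $\alpha$ is the margin of the discriminant for the correct class $a$, so a positive mean margin is precisely what we expect from an embedding that separates classes on average. Under this assumption the inequality gives $\Pr\!\left(\alpha - \mathbb{E}[\alpha] \leq -\mathbb{E}[\alpha]\right) \leq \mathrm{Var}(\alpha)/(\mathrm{Var}(\alpha) + \mathbb{E}[\alpha]^2)$.

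Substituting this bound back into $\Pr(\alpha > 0) = 1 - \Pr(\alpha \leq 0)$ yields $\Pr(\alpha > 0) \geq 1 - \mathrm{Var}(\alpha)/(\mathrm{Var}(\alpha) + \mathbb{E}[\alpha]^2) = \mathbb{E}[\alpha]^2/(\mathrm{Var}(\alpha) + \mathbb{E}[\alpha]^2)$, which is the claimed inequality. There is essentially no computational obstacle here; the proposition really does follow immediately once the right form of Cantelli's inequality is chosen. The only point warranting care is the positivity condition $\mathbb{E}[\alpha] > 0$: when $\mathbb{E}[\alpha] = 0$ the stated bound reduces to the trivial $\Pr(\alpha > 0) \geq 0$, and when $\mathbb{E}[\alpha] < 0$ the substitution $t = \mathbb{E}[\alpha]$ is no longer admissible, so the bound as written is understood to be of interest precisely in the separable regime $\mathbb{E}[\alpha] > 0$.
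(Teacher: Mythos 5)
Your derivation is correct and is exactly the argument the paper intends: the paper states Proposition \ref{prop1} as an immediate consequence of the one-sided Chebyshev (Cantelli) inequality, and your application of $\Pr(\alpha - \mathbb{E}[\alpha] \leq -t) \leq \mathrm{Var}(\alpha)/(\mathrm{Var}(\alpha) + t^2)$ with $t = \mathbb{E}[\alpha]$ to the complementary event is the standard way to make that explicit. Your care about the hypothesis $\mathbb{E}[\alpha] > 0$ is well placed and consistent with the paper's setting, since Lemma \ref{lemma1} gives $\mathbb{E}[\alpha] = 2\mathrm{Tr}(\Sigma)$, which is positive whenever the class means do not all coincide.
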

In Lemma \ref{lemma1} and Lemma \ref{lemma2}, we derive the expectation and variance of $\alpha$ when conditioned on the classes sampled in a episode. Then, in Theorem \ref{thm1}, we compose them into the \textit{RHS} of Proposition 1 through law of total expectation.
\begin{lemma} \label{lemma1}
	Consider space of classes $\mathit{C}$ with sampling distribution $\tau$, $a, b \stackrel{iid}{\sim} \tau$. Let $S = \{S_a, S_b\}$ $S_a=\{{}_{a}\mathbf{x}_{1},...,{}_{a}\mathbf{x}_{k}\}$, $ S_{b} = \{{}_{b}\mathbf{x}_{1},...,{}_{b}\mathbf{x}_{k}\}$, $k \in \mathbb{N}$ is the shot number, and $Y(\mathbf{x}) = a$. Define $\overline{\phi(S_a)} \triangleq \frac{1}{k}\sum_{\mathbf{x}\in S_a} \phi(\mathbf{x})$ and $\overline{\phi(S_{b})} \triangleq \frac{1}{k}\sum_{\mathbf{x}\in S_{b}} \phi(\mathbf{x})$. Consider $\Sigma$ as defined earlier. Assume $p(\phi(\mathbf{x})|Y(\mathbf{x})=c) = N(\boldsymbol{\upmu}_c, \Sigma_c)$ and $\Sigma_c = \Sigma_{c'}$ for any choice of $c, c' \in \mathit{C}$, then,
	\begin{align*}
	\mathbb{E}_{\mathbf{x},S|a,b}[\alpha]&=(\boldsymbol{\upmu}_a-\boldsymbol{\upmu}_b)^T(\boldsymbol{\upmu}_a-\boldsymbol{\upmu}_b) & \textrm{, and} & &\mathbb{E}_{a,b,\mathbf{x},S}[\alpha] &= 2 \mathrm{Tr}(\Sigma).
	\end{align*}
\end{lemma}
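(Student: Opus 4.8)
The plan is to compute both expectations directly from the Gaussian modelling assumptions, working outward through the tower of expectations: first condition on the sampled classes $a,b$ to obtain the left-hand identity, then average over $a,b\sim\tau$ to obtain the right-hand one. The natural starting point is to expand $\alpha$ using bilinearity of the squared Euclidean norm. Writing $\phix$ for the query embedding and $\Proto{a},\Proto{b}$ for the two prototypes, the shared term $\norm{\phix}^2$ cancels and leaves
\begin{equation*}
\alpha = 2\,\phix^{T}\!\left(\Proto{a}-\Proto{b}\right) + \norm{\Proto{b}}^2 - \norm{\Proto{a}}^2,
\end{equation*}
which reduces the entire computation to the first and second moments of the three random vectors $\phix$, $\Proto{a}$ and $\Proto{b}$.

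For the conditional expectation I would exploit the independence dictated by the sampling procedure: given $a,b$, the query $\phix$ is drawn independently of both support sets, and the two support sets are independent of each other. Under the assumption $p(\phi(\mathbf{x})\mid Y(\mathbf{x})=c)=\mathcal{N}(\bmu_c,\Sigma)$, each prototype is an average of $k$ i.i.d.\ draws, so $\E[\Proto{a}]=\bmu_a$, $\E[\Proto{b}]=\bmu_b$ and $\E[\phix]=\bmu_a$ (since $Y(\mathbf{x})=a$), while each prototype has covariance $\tfrac{1}{k}\Sigma$. The cross term then factors as $2\,\bmu_a^{T}(\bmu_a-\bmu_b)$ by independence, and the squared-norm terms are handled by $\E[\norm{\mathbf{w}}^2]=\norm{\E[\mathbf{w}]}^2+\Tr(\Cov(\mathbf{w}))$, contributing $\norm{\bmu_b}^2-\norm{\bmu_a}^2$ from the means together with $\tfrac{1}{k}\Tr(\Sigma)-\tfrac{1}{k}\Tr(\Sigma)$ from the covariances. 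Collecting these terms recombines them into $(\bmu_a-\bmu_b)^{T}(\bmu_a-\bmu_b)$, the first claim.

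The step that deserves care — and which carries the conceptual message of the section — is precisely the cancellation of the two $\tfrac{1}{k}\Tr(\Sigma)$ contributions: because both prototypes inherit the same covariance $\tfrac{1}{k}\Sigma$ and enter the two squared norms with opposite signs, all $k$-dependence vanishes from $\E[\alpha]$ and survives only in $\Var(\alpha)$ (Lemma \ref{lemma2}). For the second identity I would apply the law of total expectation, taking $\E_{a,b\sim\tau}$ of the conditional result. Expanding $\norm{\bmu_a-\bmu_b}^2$ and using that $a,b$ are i.i.d.\ makes the cross term factor as $\E[\bmu_a]^{T}\E[\bmu_b]=\norm{\bmu}^2$, leaving $2\big(\E_c[\norm{\bmu_c}^2]-\norm{\bmu}^2\big)$. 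Recognising this through the identity $\Tr(\Sigma)=\E_c[\norm{\bmu_c-\bmu}^2]=\E_c[\norm{\bmu_c}^2]-\norm{\bmu}^2$ gives $\E_{a,b,\mathbf{x},S}[\alpha]=2\Tr(\Sigma)$, completing the proof.
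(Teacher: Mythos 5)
Your proposal is correct and takes essentially the same route as the paper: both proofs reduce everything to the identity $\E[\norm{X}^2]=\norm{\E[X]}^2+\Tr(\Cov(X))$ plus independence of the query and the two support sets (the paper applies the identity to the difference vectors $\phix-\Proto{a}$ and $\phix-\Proto{b}$, so that two $(1+\tfrac{1}{k})\Tr(\Sigma_c)$ terms cancel, whereas you expand $\alpha$ first so that only the prototypes' $\tfrac{1}{k}\Tr(\Sigma_c)$ terms appear and cancel — the same bookkeeping in a different order), and your outer expectation over $a,b$ matches the paper's step for step. One notational caution: you write the Gaussian assumption as $\mathcal{N}(\bmu_c,\Sigma)$, overloading $\Sigma$ — which the paper (and your own final step) reserves for the between-class covariance $\E_c[(\bmu_c-\bmu)(\bmu_c-\bmu)^T]$ — with the common within-class covariance $\Sigma_c$; this is harmless here because those terms cancel, but the two matrices must be kept distinct.
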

\begin{lemma}\label{lemma2}
	Under the same notation and assumptions as Lemma \ref{lemma1}, additionally invoking definition for $\Sigma_c$, then,
	\begin{equation}
	\mathbb{E}_{a,b}[\mathrm{Var}(\alpha|a,b)] \leq 8(1+\frac{1}{k}) \mathrm{Tr} \left( \Sigma_c ((1+\frac{1}{k})\Sigma_c + 2\Sigma) \right).
	\end{equation}
\end{lemma}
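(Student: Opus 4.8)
The plan is to reduce $\alpha$ to a quadratic polynomial in three independent, centered Gaussian vectors and then compute its variance term by term. Writing $z=\phi(\mathbf{x})$, $\bar{a}=\overline{\phi(S_a)}$, $\bar{b}=\overline{\phi(S_b)}$, the algebraic identity $\norm{z-\bar b}^2-\norm{z-\bar a}^2=(\bar a-\bar b)^T(2z-\bar a-\bar b)$ rewrites $\alpha$ compactly. Under the modelling assumptions, $z\sim\mathcal N(\bmu_a,\Sigma_c)$, $\bar a\sim\mathcal N(\bmu_a,\tfrac1k\Sigma_c)$, $\bar b\sim\mathcal N(\bmu_b,\tfrac1k\Sigma_c)$ are mutually independent given $a,b$. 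I would center them as $\tilde z=z-\bmu_a$, $\tilde a=\bar a-\bmu_a$, $\tilde b=\bar b-\bmu_b$ and set $\mathbf m=\bmu_a-\bmu_b$, which turns $\alpha$ into $(\tilde a-\tilde b+\mathbf m)^T(2\tilde z-\tilde a-\tilde b+\mathbf m)$. Expanding yields a clean split $\alpha=P+L+C$ into a purely quadratic part $P=(\tilde a-\tilde b)^T(2\tilde z-\tilde a-\tilde b)$, a linear part that simplifies to $L=2\mathbf m^T(\tilde z-\tilde b)$, and the constant $C=\norm{\mathbf m}^2$ (which already reproduces $\mathbb E[\alpha\mid a,b]$ from Lemma~\ref{lemma1}).

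Since $C$ is deterministic, $\mathrm{Var}(\alpha\mid a,b)=\mathrm{Var}(P)+\mathrm{Var}(L)+2\,\mathrm{Cov}(P,L)$. The key simplification is that $\mathrm{Cov}(P,L)=0$: the product $PL$ is a degree-three monomial in centered jointly Gaussian variables, so its expectation vanishes, as does $\mathbb E[P]\,\mathbb E[L]$. For the linear term, independence of $\tilde z$ and $\tilde b$ gives $\mathrm{Var}(L)=4\,\mathbf m^T\mathrm{Cov}(\tilde z-\tilde b)\,\mathbf m=4(1+\tfrac1k)\,\mathbf m^T\Sigma_c\mathbf m$. For the quadratic term I would split $P=A+B$ with $A=2(\tilde a-\tilde b)^T\tilde z$ and $B=\norm{\tilde b}^2-\norm{\tilde a}^2$; the cross expectation $\mathbb E[AB]$ vanishes because $A$ carries a single independent factor $\tilde z$. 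Conditioning on $\tilde a,\tilde b$ and using $\mathbb E_{\tilde z}[\tilde z^TM\tilde z]=\mathrm{Tr}(M\Sigma_c)$ gives $\mathbb E[A^2]=\tfrac8k\,\mathrm{Tr}(\Sigma_c^2)$, while the standard Gaussian quadratic-form identity $\mathrm{Var}(\mathbf g^T\mathbf g)=2\,\mathrm{Tr}(\Xi^2)$ for $\mathbf g\sim\mathcal N(0,\Xi)$, applied with $\Xi=\tfrac1k\Sigma_c$, yields $\mathbb E[B^2]=\tfrac{4}{k^2}\,\mathrm{Tr}(\Sigma_c^2)$.

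Finally I would average over $a,b\stackrel{iid}{\sim}\tau$. Only the $\mathbf m^T\Sigma_c\mathbf m$ term depends on the classes, and writing it as $\mathrm{Tr}\big(\Sigma_c(\bmu_a-\bmu_b)(\bmu_a-\bmu_b)^T\big)$ together with $\mathbb E_{a,b}[(\bmu_a-\bmu_b)(\bmu_a-\bmu_b)^T]=2\Sigma$ (since $\bmu_a,\bmu_b$ are i.i.d.\ with covariance $\Sigma$ and equal means) gives $\mathbb E_{a,b}[\mathbf m^T\Sigma_c\mathbf m]=2\,\mathrm{Tr}(\Sigma_c\Sigma)$. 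Collecting terms produces the exact value $\mathbb E_{a,b}[\mathrm{Var}(\alpha\mid a,b)]=(\tfrac8k+\tfrac{4}{k^2})\,\mathrm{Tr}(\Sigma_c^2)+8(1+\tfrac1k)\,\mathrm{Tr}(\Sigma_c\Sigma)$. The claimed bound then follows by loosening this: since $\Sigma_c,\Sigma\succeq0$ force $\mathrm{Tr}(\Sigma_c^2)\ge0$ and $\mathrm{Tr}(\Sigma_c\Sigma)\ge0$, one replaces $\tfrac8k+\tfrac4{k^2}$ by the larger $8(1+\tfrac1k)^2$ and $8(1+\tfrac1k)$ by the larger $16(1+\tfrac1k)$, which recombine exactly into $8(1+\tfrac1k)\,\mathrm{Tr}\!\big(\Sigma_c((1+\tfrac1k)\Sigma_c+2\Sigma)\big)$. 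The main obstacle is the quadratic part $\mathrm{Var}(P)$: one must invoke the fourth-moment/quadratic-form variance formula correctly and carefully exploit the independence structure so that all odd and mixed cross terms ($\mathrm{Cov}(P,L)$ and $\mathbb E[AB]$) drop out; the concluding inequality is then routine.
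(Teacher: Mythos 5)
Your proof is correct, and it takes a genuinely different route from the paper's. The paper never controls the cross-covariance exactly: writing $X=\norm{\phix-\Proto{b}}^2$ and $Y=\norm{\phix-\Proto{a}}^2$, it bounds $\mathrm{Var}(\alpha\mid a,b)=\mathrm{Var}(X)+\mathrm{Var}(Y)-2\,\mathrm{Cov}(X,Y)\le 2\,\mathrm{Var}(X)+2\,\mathrm{Var}(Y)$ via Cauchy--Schwarz followed by $2uv\le u^2+v^2$, evaluates each variance with the noncentral Gaussian quadratic-form formula $\mathrm{Var}(y^TAy)=2\,\mathrm{Tr}((A\Sigma)^2)+4\bmu^TA\Sigma A\bmu$, and then averages over $a,b$ exactly as you do; the coefficients $8(1+\tfrac1k)^2$ and $16(1+\tfrac1k)$ in the lemma come directly from this doubling. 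You instead center the three independent Gaussian vectors, split $\alpha=P+L+C$ into homogeneous degree-$2$, degree-$1$, and constant parts, and observe that every cross term ($\mathrm{Cov}(P,L)$ and $\mathbb{E}[AB]$) vanishes exactly by independence and vanishing odd moments of centered Gaussians, yielding the exact conditional variance $(\tfrac{8}{k}+\tfrac{4}{k^2})\,\mathrm{Tr}(\Sigma_c^2)+4(1+\tfrac1k)(\bmu_a-\bmu_b)^T\Sigma_c(\bmu_a-\bmu_b)$, with sharpness discarded only in the last line. What the paper's argument buys is brevity: one black-box theorem plus two elementary inequalities, and no bookkeeping of cross terms. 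What yours buys is a strictly stronger statement: the exact identity shows the $\mathrm{Tr}(\Sigma_c^2)$ "noise" contribution decays like $1/k$ rather than saturating near $8\,\mathrm{Tr}(\Sigma_c^2)$ as the paper's bound suggests, so the lemma (and hence the denominator of Theorem 1) would hold with tighter constants, and the paper's qualitative conclusions about the role of $k$ only get stronger. Your final loosening is legitimate since $\mathrm{Tr}(\Sigma_c^2)\ge 0$ and $\mathrm{Tr}(\Sigma_c\Sigma)=\mathrm{Tr}\bigl(\Sigma^{1/2}\Sigma_c\Sigma^{1/2}\bigr)\ge 0$ for positive semidefinite $\Sigma_c,\Sigma$.
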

The proofs of the above lemmas are in the appendix. 
With the results above, we are ready to state our main theoretical result in this section.
\begin{theorem}\label{thm1}
	Under the conditions where Lemma \ref{lemma1} and \ref{lemma2} hold, we have: 
	\begin{equation}\label{eqn_thm1}
	R(\phi) \geq \frac{4 \mathrm{Tr}(\Sigma)^2}{8(1+1/k)^2\mathrm{Tr}(\Sigma_c^2) + 16(1+1/k)\mathrm{Tr}(\Sigma\Sigma_c) + \mathbb{E}_{a,b}[((\boldsymbol{\upmu}_a-\boldsymbol{\upmu}_b)^T(\boldsymbol{\upmu}_a-\boldsymbol{\upmu}_b))^2]}.
	\end{equation}
\end{theorem}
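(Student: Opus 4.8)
The plan is to feed the moments computed in Lemmas \ref{lemma1} and \ref{lemma2} into the Cantelli bound of Proposition \ref{prop1}, using the law of total expectation to glue the conditional quantities into the unconditional ones. Proposition \ref{prop1} already supplies $R(\phi) \geq \mathbb{E}[\alpha]^2 / (\mathrm{Var}(\alpha) + \mathbb{E}[\alpha]^2)$, so the entire task reduces to (i) evaluating the numerator $\mathbb{E}[\alpha]^2$ and (ii) upper-bounding the denominator $\mathrm{Var}(\alpha) + \mathbb{E}[\alpha]^2 = \mathbb{E}[\alpha^2]$.

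For the numerator I would substitute the unconditional mean from Lemma \ref{lemma1}, namely $\mathbb{E}_{a,b,\mathbf{x},S}[\alpha] = 2\mathrm{Tr}(\Sigma)$, so that $\mathbb{E}[\alpha]^2 = 4\mathrm{Tr}(\Sigma)^2$, which is exactly the claimed numerator. For the denominator, the key observation is that $\mathrm{Var}(\alpha) + \mathbb{E}[\alpha]^2 = \mathbb{E}[\alpha^2]$, and by the tower property conditioned on the sampled pair $(a,b)$ one has $\mathbb{E}[\alpha^2] = \mathbb{E}_{a,b}[\mathrm{Var}(\alpha \mid a,b)] + \mathbb{E}_{a,b}[(\mathbb{E}[\alpha \mid a,b])^2]$. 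The first summand is controlled directly by Lemma \ref{lemma2}; expanding its product $8(1+1/k)\mathrm{Tr}(\Sigma_c((1+1/k)\Sigma_c + 2\Sigma))$ into $8(1+1/k)^2\mathrm{Tr}(\Sigma_c^2) + 16(1+1/k)\mathrm{Tr}(\Sigma\Sigma_c)$ reproduces the first two terms of the target denominator. The second summand is handled by the conditional mean in Lemma \ref{lemma1}, $\mathbb{E}[\alpha\mid a,b] = (\boldsymbol{\upmu}_a - \boldsymbol{\upmu}_b)^T(\boldsymbol{\upmu}_a - \boldsymbol{\upmu}_b)$, which yields $\mathbb{E}_{a,b}[(\mathbb{E}[\alpha\mid a,b])^2] = \mathbb{E}_{a,b}[((\boldsymbol{\upmu}_a-\boldsymbol{\upmu}_b)^T(\boldsymbol{\upmu}_a-\boldsymbol{\upmu}_b))^2]$, precisely the third denominator term.

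The final step is to check the direction of the inequality. Since the map $D \mapsto m^2/D$ is decreasing for fixed $m^2 = \mathbb{E}[\alpha]^2 > 0$, replacing the exact denominator $\mathbb{E}[\alpha^2]$ by the larger quantity obtained from the Lemma \ref{lemma2} bound only decreases the right-hand side, so the resulting fraction remains a valid lower bound for $R(\phi)$ and matches \eqref{eqn_thm1}.

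I do not anticipate a serious obstacle; the two points to watch are bookkeeping and positivity. First, one must keep the variance-versus-second-moment accounting straight, so that the additive $+\mathbb{E}[\alpha]^2$ coming from Cantelli is absorbed exactly once into $\mathbb{E}_{a,b}[(\mathbb{E}[\alpha\mid a,b])^2]$ via the total-expectation split, rather than being double-counted alongside the conditional variance. Second, the application of the one-sided Chebyshev (Cantelli) inequality and of the monotonicity argument both require $\mathbb{E}[\alpha] > 0$; this holds because $\mathbb{E}[\alpha] = 2\mathrm{Tr}(\Sigma) > 0$ whenever the class means are genuinely spread out in the embedding space, which is the only nondegeneracy assumption implicitly in force.
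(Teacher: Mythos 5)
Your proposal is correct and takes essentially the same route as the paper's own proof: both apply the law of total expectation to rewrite the Cantelli denominator $\mathrm{Var}(\alpha)+\mathbb{E}[\alpha]^2$ as $\mathbb{E}_{a,b}[\mathrm{Var}(\alpha\mid a,b)]+\mathbb{E}_{a,b}\bigl[\mathbb{E}[\alpha\mid a,b]^2\bigr]$ and then substitute Lemmas \ref{lemma1} and \ref{lemma2}, expanding the product in the variance bound to get the two $k$-dependent terms. Your explicit checks of the inequality direction (enlarging the denominator only weakens the bound) and of the positivity $\mathbb{E}[\alpha]=2\mathrm{Tr}(\Sigma)>0$ needed for the one-sided Chebyshev step are sound and are points the paper leaves implicit.
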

\begin{proof}
	First, we use decompose the $\mathrm{Var}(\alpha)$ term in Proposition \ref{prop1} by Law of Total Expectation.
	\begin{align}
	\mathrm{Var}(\alpha) &= \mathbb{E}_{a,b,\mathbf{x},S}[\alpha^2] - \mathbb{E}_{a,b,\mathbf{x},S}[\alpha]^2 = \mathbb{E}_{a,b} \mathbb{E}_{\mathbf{x},S}[\alpha^2|a,b] - \mathbb{E}_{a,b,\mathbf{x},S}[\alpha]^2 \\
	&=  \mathbb{E}_{a,b} [\mathrm{Var}(\alpha|a,b) + \mathbb{E}_{\mathbf{x},S}[\alpha|a,b]^2] - \mathbb{E}_{a,b,\mathbf{x},S}[\alpha]^2.
	\end{align}
	Hence, Proposition \ref{prop1} can also be expressed as
	\begin{equation}\label{eqn_thm1_1}
	R(\phi) \geq \frac{\mathbb{E}[\alpha]^2}{\mathbb{E}_{a,b} [\mathrm{Var}(\alpha|a,b) + \mathbb{E}_{\mathbf{x},S}[\alpha|a,b]^2]}.
	\end{equation}
	Finally, we arrive at Theorem \ref{thm1} by plugging Lemma \ref{lemma1} and \ref{lemma2} into \eqref{eqn_thm1_1}.
\end{proof}
Several observations can be made from Theorem \ref{thm1}:

1. The shot number $k$ only appears in the first two terms of the denominator, \RB{implying that the bound saturates quickly with increasing $k$. This is also in agreement with the empirical observation that meta-testing accuracy has diminishing improvements when more support data is added.}

2. By observing the degree of terms in \eqref{eqn_thm1} (and treating the last term of the denominator as a constant), it is clear that increasing $k$ will decrease the sensitivity (magnitude of partial derivative) of this lower bound to $\Sigma_c$, and increase its sensitivity to $\Sigma$.

3. If one postulates that meta-learning updates on $\phi$ are similar to gradient ascent on this accuracy lower bound, then learning with smaller $k$ emphasizes minimizing noise, while learning with higher $k$ emphasizes maximizing signal.

In conclusion, these observations give us a plausible reason for the performance degradation observed in mismatched shots: when an embedding function has been optimized (trained) for $k_{train} > k_{test}$, the relatively high $\Sigma_c$ is amplified by the now smaller $k$, resulting in degraded performance.  Conversely, an embedding function trained for $k_{train} < k_{test}$ already has small $\Sigma_c$, such that increasing $k$ during testing has further diminished \RB{improvement} on performance.

\subsection{Interpretation in terms of VC dimension} \label{secvc}
In any given episode, a nearest neighbour prediction is performed from the support data (with a fixed embedding function). Therefore, a PAC learnability interpretation of the relation between the number of support data and complexity of the classifier can be made. Specifically, for binary classification, classical PAC learning theory \citep{VCdim} states that with probability of at least $1-\delta$, the following inequality on the difference between empirical error $err_{train}$ (of the support samples) and true error $err_{true}$ holds for any classifier $h$:
\begin{equation} \label{eq:errtrue}
err_{true}(h) - err_{train}(h) \leq \sqrt{\frac{D(\ln{\frac{4k}{D} + 1}) + \ln{\frac{4}{\delta}}}{2k}}
\end{equation}
Where $D$ is the VC dimension, and $k$ is the number of support samples per class \footnote{This considers learning of a single episode through the formation of prototypes}. Under this binary classification setting, the predictions of prototypical network are equal to $\sigma(\alpha)$ as shown earlier. Denoting $\mathbf{z}_c = \overline{\phi(S_c)}$ and $\mathbf{z}_{c'} = \overline{\phi(S_{c'})}$, we can manipulate $\alpha$ as follows:
\begin{equation}
\alpha = \norm{\phi(\mathbf{x}) - \mathbf{z}_c}^2-\norm{\phi(\mathbf{x})-\mathbf{z}_{c'}}^2 
\quad = 2(\mathbf{z}_{c'} - \mathbf{z}_c)^T\phi(\mathbf{x}) + (\mathbf{z}_{c}^T \mathbf{z}_{c} - \mathbf{z}_{c'}^T \mathbf{z}_{c'}) \label{linalpha}
\end{equation}
From \eqref{linalpha}, the prototypes form a linear classifier with offset in the embedding space. The VC dimension of this type of classifier is $1+d$ where $d$ is the intrinsic dimension of the embedding space \citep{Statml}. In ProtoNets, the intrinsic dimension of the embedding space is not only influenced by network architecture, but more importantly determined by the parameter themselves, making it a learned property. For example, if the embedding function can be represented with a linear transformation $\phi(\mathbf{x}) = \Phi \cdot \mathbf{x}$, then the intrinsic dimension of the embedding space is upper bounded by the rank of $\Phi$ (since all embeddings must lie in the column space of $\Phi$). Thus, the number of support samples required to learn from an episode is proportional to the intrinsic dimension of the embedding space. We hypothesize that an embedding function optimal for lower shot (e.g. one-shot) classification affords fewer intrinsic dimensions than one that is optimal for higher shot (e.g. 10-shot) classification. 

\subsection{Reconciling shot discrepancy through embedding space transformation} \label{sec:reconciling}
\RB{Observations in Section \ref{secvc} reveal that an ideal $\phi$ would have an output space whose intrinsic dimension $d$ is as small as possible to minimize the right-hand side of \eqref{eq:errtrue} but just large enough to allow low $err_{train}$; the balance between the two objectives is dictated by $k$.
Similarly, observations in Section \ref{secns} suggest that an ideal $\phi$ would balance between minimizing $\Sigma_c$ and maximizing $\Sigma$ also according to $k$. 
As a result, when there is discrepancy between meta-training shots and meta-testing shots\footnote{In deployment, the number of support samples per class is likely random from episode to episode.}, accuracy at meta-test time will suffer. 
A naive solution is to prepare many embedding functions trained for different shots, and select the embedding function according to the availability of label data at test-time. However, this solution is computationally burdensome as it requires multiple models to be trained and stored. Instead, we want to train and store a single model, and then adapt the embedding function's variance characteristics and the embedding space dimensionality to achieve good performance for any test shot.}

\RB{Linear Discriminant Analysis (LDA) is a dimensionality reduction method suited for downstream classification \citep{LDASPR}. Its goal is to find a maximally discriminating subspace (maximum inter-class variance and minimal intra-class variance) for a given classification task. Theoretically, performance can be maximized in each individual episode by computing the LDA transformation matrix using support samples of that episode. LDA computes the eigenvectors of the matrix $S^{-1} S_{\mu}$, where $S_{\mu}$ is the covariance matrix of prototypes and $S$ is the class-conditional covariance matrix. In practice, $S_{\mu}$ and $S$ cannot be stably estimated in few-shot episodes, preventing the direct application of LDA. }

We propose an alternative which we call Embedding Space Transformation (EST). \RB{The purpose of EST is to perform dimensionality reduction on the features, while also improving the ratio in Theorem \ref{thm1}. This is a different goal from LDA because Theorem \ref{thm1} demonstrates that the expected performance \emph{across many episodes} can be improved by maximizing $\Sigma$ and minimizing $\Sigma_c$.} Similar to LDA, EST works by applying a linear transformation 
\begin{equation}
 \phi(\mathbf{x}) \mapsto V^*(\phi(\mathbf{x}))    
\end{equation}
to the outputs of the embedding function. Here, $V^*$ is a linear transformation computed using $\mathcal{D}_{tr}$ after meta-training has completed. 
To compute $V^*$, we first iterate through all classes in $\mathcal{D}_{tr}$ and compute their in-class means and covariance matrices in the embedding space. We can then find the covariance of means $\Sigma_{\mu}$, and the mean of covariances $\overline{\Sigma}_s$ across $\mathcal{D}_{tr}$. Finally, $V^*$ is computed by taking the leading eigenvectors of $\Sigma_{\mu} - \rho\overline{\Sigma}_s$ - the difference between the covariance matrix of the mean and the mean covariance matrix with weight parameter $\rho$. The exact procedure for computing $V^*$ is presented in the appendix.

\section{Experiments and results}
In this section, our first two experiments aim at supporting our theoretical results in Sections \ref{secns} and \ref{secvc}, while our third experiment demonstrates the improvement of EST on benchmark data sets over vanilla ProtoNets. To illustrate the applicability of our results to different embedding function architectures, all experiments are performed with both a vanilla 4-layer CNN (as in \citep{protonet}) and a 7-layer Residual network \citep{resnet}. Detailed description of the architecture can be found in the appendix. Experiments are performed on three data sets: Omniglot \citep{omniglot}, \textit{mini}ImageNet \citep{matchnet}, and \textit{tiered}ImageNet \citep{fewshotssl}. We followed standard data processing procedures which are detailed in the appendix. \par
\subsection{Training shots affect variance contribution}
The total variance observed among embeddings of all data points can be seen as a composition of inter-class and (expected) intra-class variance based on the law of total variance ($\mathrm{Var}[\mathbf{x}] = \mathbb{E}[\mathrm{Var}(\mathbf{x}|c)] + \mathrm{Var}(\mathbb{E}[\mathbf{x}|c]) = \mathbb{E}[\Sigma_c] + \Sigma_\mu$). Our analysis predicts that as we increase the shot number used during training, the ratio of inter-class to intra-class variance will decrease. 

To verify this hypothesis, we trained ProtoNets (vanilla and residual) with a range of shots (1 to 10 on \textit{mini}ImageNet and tiered imagenet, 1 to 5 on Omniglot) until convergence with 3 random initializations per group. Then, we computed the inter-class and intra-class covariance matrices across all samples in the training-set embedded by each network. To qualify the amplitude of each matrix, we take the trace of each covariance matrix. The ratio of inter-class to intra-class variance is presented in Figure \ref{fig:cov}: as we increase $k$ used during training, the inter-class to intra-class variance ratio decreases. This trend can be observed in both vanilla and residual embeddings, and across all three data sets, lending strong support to our result in Section \ref{secns}. Another observation can be made that the ratio between inter-class and intra-class variance is significantly higher in the Omniglot data set than the other two data sets. This may indeed be reflective of the relative difficulty of each data set and the accuracy of ProtoNet on the data sets.
\begin{figure}[ht]
	\vspace{-0.1in}
	\centering
	\begin{small}
		\begin{subfigure}[b]{.49\linewidth}
			\centering
			\captionsetup{width=.95\linewidth}
			\includegraphics[height=1.8in]{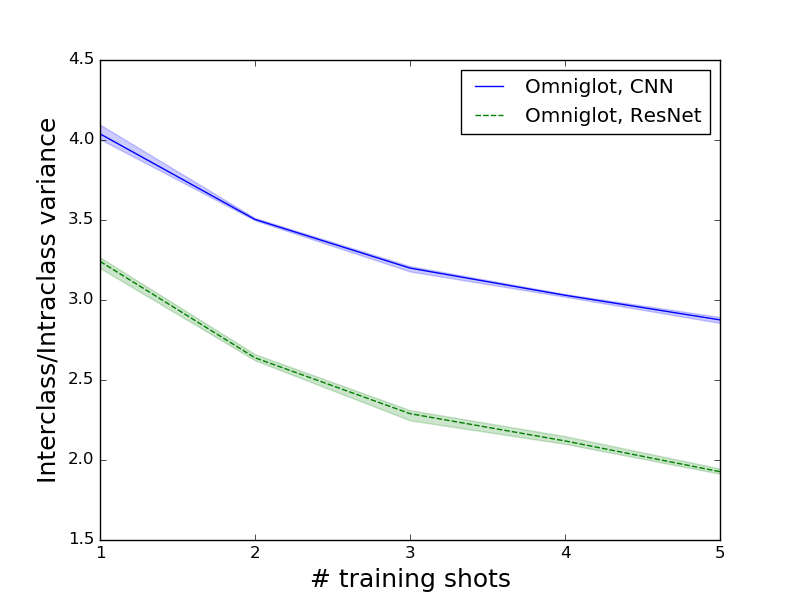}
		\end{subfigure}%
		\begin{subfigure}[b]{.49\linewidth}
			\centering
			\captionsetup{width=.95\linewidth}
			\includegraphics[height=1.8in]{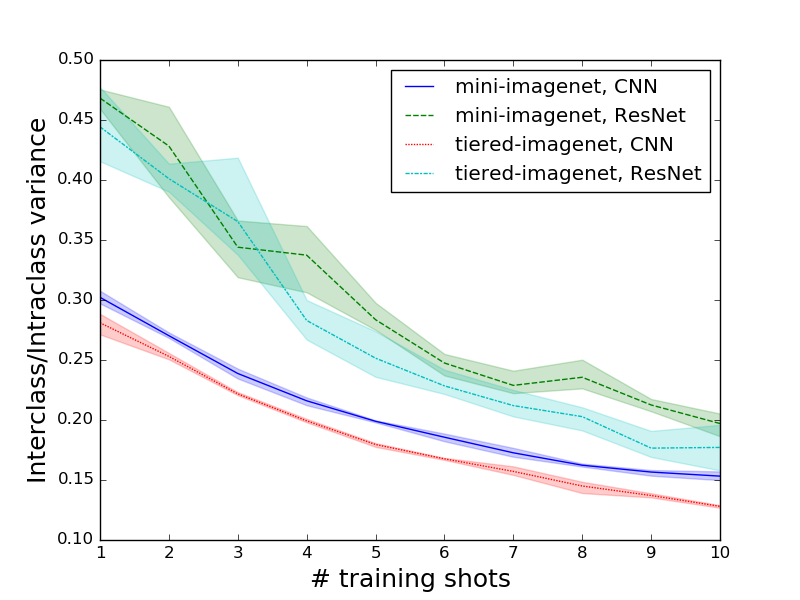}
		\end{subfigure}
	
		\captionsetup{justification=centering}
		\caption{Inter-class to Intra-class variance ratios of embedding space varies with $k$ used in training. Left: Omniglot. Right: \textit{mini}ImageNet and \textit{tiered}ImageNet.}
		\label{fig:cov}
	\end{small}
\end{figure}

\subsection{Training shots affect intrinsic dimension}
We consider the intrinsic dimension (id) of an embedding function with extrinsic dimension $E$ (operated on a data set) to be defined as the minimum integer $d$ where all embedded points of that data set lie within a $d$-dimensional subspace of $\mathbb{R}^E$ \citep{bishop}. A simple method for estimating $d$ is through principal component analysis (PCA) of the embedded data set. By eigendecomposing the covariance matrix of embeddings, we obtain the principal components expressed as the significant eigenvalues, and the principal directions expressed as the eigenvectors corresponding to those eigenvalues. The number of significant eigenvalues approximates the intrinsic dimension of the embedding space. When the subspace is linear, this approximation is exact; otherwise, it serves as an upper bound to the true intrinsic dimension \citep{DimReduce}. \par 
We determine the number of significant eigenvalues by an explained-variance over total-variance criterion. The qualifying metric is
$
r_d \triangleq \sum_{i \in [1,d]} \lambda_i / \sum_{i \in [1, E]} \lambda_i.
$
In our experiments, we set the threshold for $r_d$ at $0.9$. Similar to the previous experiment, we train ProtoNets with different shots to convergence. The total covariance matrix is then computed on the training set and eigendecomposition is performed. The approximate id is plotted for various values of $k$ in Figure \ref{fig:id}. We can see a clear trend that as we increase training shot, the id of the embedding space increases.
\begin{figure}[ht]
	\vspace{-0.12in}
	\centering
	\begin{subfigure}[b]{.49\linewidth}
		\centering
		\includegraphics[height=1.8in]{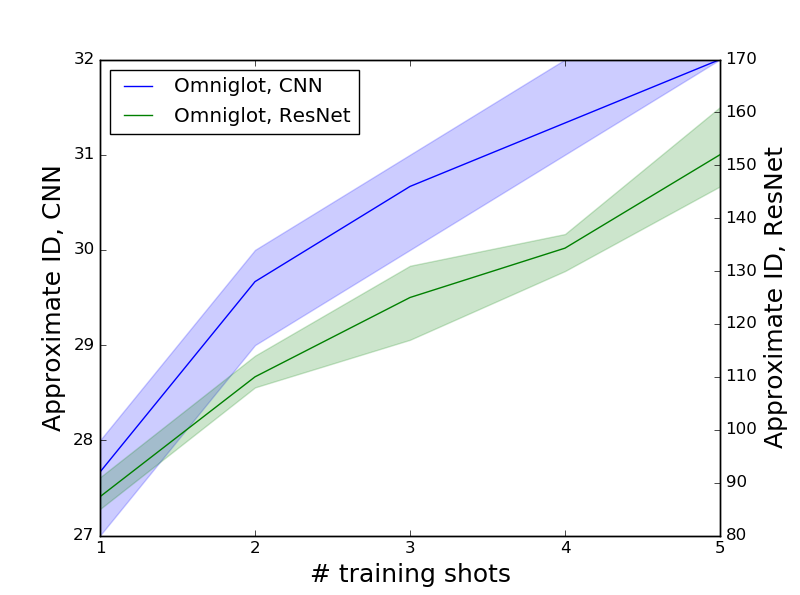}
	\end{subfigure}%
	\begin{subfigure}[b]{.49\linewidth}
		\centering
		\includegraphics[height=1.8in]{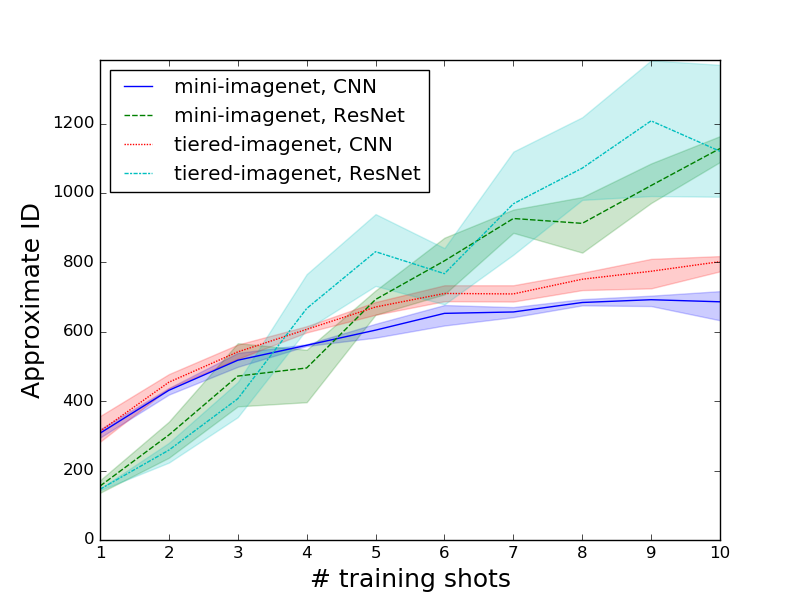}
	\end{subfigure}
	\captionsetup{justification=centering}
	\vspace{-2.5mm}
	\caption{Intrinsic dimension approximated by the number of significant eigenvalues varies with $k$ used in training. Left: Omniglot. Right: \textit{mini}ImageNet and \textit{tiered}ImageNet}
	\label{fig:id}
\end{figure}
\vspace{-6mm}
\subsection{Experiments with EST} \label{sec:performance}
We evaluate the performance of EST on the three aforementioned data sets. The performance is compared against our implementation of vanilla ProtoNets as a baseline, as well as a variant of ProtoNets using principal components obtained from all embedding points (ProtoNet-PCA).

All methods in this section use the same set of trained ProtoNets. As with before, networks are trained with $k\in \{1,...,5\}$ on Omniglot and $k \in \{1,...,10\}$ on \textit{mini}ImageNet and \textit{tiered}ImageNet.
Additionally, we also trained a mixed-shot network for each data set. This is done by randomly selecting a value for $k$ within the specified range for each episode, and then sampling the corresponding number of support samples. Hyper-parameters for training are described in the appendix. 

Each model is evaluated on the test splits of the corresponding data sets (e.g. networks trained on Omniglot are only evaluated on Omniglot). Five test runs are performed per network on Omniglot to evaluate the $k$-shot performance ($k \in [1,5]$). Each run consists of 600 episodes formed by 1-5 support samples and 5 query sample per class. The performance is aggregated across runs for the combined performance. Similarly, on \textit{mini}ImageNet and \textit{tiered}ImageNet, 10 test runs are performed with support per sample $k \in [1, 10]$, 600 episodes per run, and 15 query samples in each episode.

\textbf{Model configuration:} \emph{Vanilla ProtoNet} is used as our baseline. We present the performance of multiple ProtoNets trained with different shots to illustrate the performance degradation issue.
\emph{ProtoNet-PCA} uses principal components of the training split embeddings in place of $V^*$, with components other than the $d$ leading ones zeroed out. We carry out a parameter sweep on \textit{mini}ImageNet and set $d=60$; the same value is used on the other two data sets. For selecting the training shot of the embedding network, we find that overall performance to be optimal using $k=5$.
\emph{ProtoNet-EST} contains three parameters that need to be determined: $\rho$, $d$, and training shots of the embedding network. For our experiments, we set $\rho=0.001$ and $d=60$ based on performance on \textit{mini}ImageNet. For selecting the number of training shots, we use the same strategy as before by evaluating ProtoNet-EST with all trained embedding networks and found the same trend to hold. \\
As an abalation study, \emph{FC-ProtoNet} adds a fully connected layer to the embedding network such that the output dimension is also $60$. Results of this variant can be found in the appendix.

\begin{table*}[!t]
	\vspace{-3pt}
	\centering
	\captionsetup{width=0.9\linewidth, justification=centering}
	\caption{Classification accuracies of ProtoNet variants. Best performing methods and any other runs within $95\%$ confidence margin is in bold}
	\label{main-table}
	\vspace{-6pt}
	\begin{center}
		\begin{small}
			\begin{sc}
				\begin{subtable}{0.49\textwidth}
					\centering
					\captionsetup{width=.9\linewidth}
					\subcaption{\emph{Omniglot-20-way}, with 4 layer CNN.}
					\resizebox{\textwidth}{!}{\begin{tabular}{lcccr}
							\toprule
							& Training &\multicolumn{2}{c}{Testing Shots} & Average \\ 
							Model &  Shots & 1 & 5 & Accuracy \\
							\midrule
							Vanilla ProtoNet & 1 & \textbf{95.07} $\%$ & \textbf{98.89} $\%$ & \textbf{97.81} $\%$ \\
							Vanilla ProtoNet & 5 & 93.42 $\%$ & 98.78 $\%$ & 97.25 $\%$ \\
							Mixed-$k$ Shot & 1-5 & 94.84 $\%$ & \textbf{98.92} $\%$& 97.74 $\%$ \\
							PCA ProtoNet & 1  & \textbf{94.94} $\%$ & \textbf{98.85} $\%$ & \textbf{97.78} $\%$ \\
							\midrule
							EST ProtoNet & 1 & \textbf{95.11} $\%$ & 98.84 $\%$ & \textbf{97.83} $\%$ \\
							
							\bottomrule
					\end{tabular}}
				\end{subtable}
				\begin{subtable}{0.49\textwidth}
					\centering
					\captionsetup{width=.9\linewidth}
					\subcaption{\emph{Omniglot-20-way}, with 7 layer ResNet.}
					\resizebox{\textwidth}{!}{\begin{tabular}{lcccr}
							\toprule
							& Training &\multicolumn{2}{c}{Testing Shots} &  Average\\ 
							Model &  Shots & 1 & 5 & Accuracy \\
							\midrule
							Vanilla ProtoNet & 1 & \textbf{96.46} $\%$ &99.07 $\%$& \textbf{98.35} $\%$ \\
							Vanilla ProtoNet & 5 & 94.42 $\%$ & 98.99 $\%$& 97.75 $\%$ \\
							Mixed-$k$ Protonet & 1-5 & \textbf{96.53} $\%$ & \textbf{99.15} $\%$& \textbf{98.43} $\%$ \\
							PCA Protonet & 1 & 96.02 $\%$ &  98.99 $\%$ & 98.19 $\%$ \\
							\midrule
							EST Protonet & 1 & 95.55 $\%$ & 99.02 $\%$& 98.19 $\%$ \\
							\bottomrule
					\end{tabular}}
				\end{subtable}
				\vskip 0.10in
				\begin{subtable}{0.49\textwidth}
					\centering
					\captionsetup{width=.9\linewidth}
					\subcaption{\textit{mini}ImageNet-5-way, with 4 layer CNN.}
					\resizebox{\textwidth}{!}{\begin{tabular}{lccccr}
							\toprule
							& Training &\multicolumn{3}{c}{Testing Shots} & Average \\ 
							Model &  Shots & 1 & 5 & 10 & Accuracy \\
							\midrule
							ProtoNet & 1 & 48.89 $\%$ & 64.70 $\%$ & 68.90 $\%$  & 63.15 $\%$ \\
							ProtoNet & 5 & 44.75 $\%$ & 67.23 $\%$ & 72.36 $\%$ & 65.08 $\%$ \\
							ProtoNet & 10 & 39.99 $\%$ & 66.23 $\%$ & 72.47 $\%$ & 63.54 $\%$ \\
							Mixed-$k$ Shot & 1-10 & 49.36 $\%$ & 67.96 $\%$& 72.27 $\%$& 65.83 $\%$ \\
							PCA ProtoNet & 5  & 49.36 $\%$ & \textbf{68.63} $\%$ & \textbf{72.82} $\%$ & 66.12 $\%$ \\
							\midrule
							EST ProtoNet & 5 & \textbf{50.22} $\%$ & \textbf{68.25} $\%$ & \textbf{73.29} $\%$ & \textbf{66.60} $\%$ \\
							\bottomrule
					\end{tabular}}
				\end{subtable}
				\begin{subtable}{0.49\textwidth}
					\centering
					\captionsetup{width=.9\linewidth}
					\subcaption{\textit{mini}ImageNet-5-way, with 7 layer ResNet.}
					\resizebox{\textwidth}{!}{\begin{tabular}{lccccr}
							\toprule
							& Training &\multicolumn{3}{c}{Testing Shots} & Average \\ 
							Model &  Shots & 1 & 5 & 10 & Accuracy \\
							\midrule
							ProtoNet & 1 & \textbf{52.65} $\%$ & 68.27 $\%$ & 72.29 $\%$  & 66.73 $\%$ \\
							ProtoNet & 5 & 47.40 $\%$ & \textbf{69.93} $\%$ & 74.35 $\%$ & 67.18 $\%$ \\
							ProtoNet & 10 & 42.20 $\%$ & 68.23 $\%$ & \textbf{74.54} $\%$ & 65.75 $\%$ \\
							Mixed-$k$ Shot & 1-10 & 51.74 $\%$ & 69.09 $\%$& 73.63 $\%$& 67.41 $\%$ \\
							PCA ProtoNet & 5  & 50.09 $\%$ & 69.25 $\%$ & \textbf{74.24} $\%$ & 67.63 $\%$ \\
							\midrule
							EST ProtoNet & 5 & 51.93 $\%$ & \textbf{69.98} $\%$ & \textbf{74.80} $\%$ & \textbf{68.19} $\%$ \\
							\bottomrule
					\end{tabular}}
				\end{subtable}
				\vskip 0.10in
				\begin{subtable}{0.49\textwidth}
					\centering
					\captionsetup{width=.9\linewidth}
					\subcaption{\textit{tiered}ImageNet-5-way, with 4 layer CNN.}
					\resizebox{\textwidth}{!}{\begin{tabular}{lccccr}
							\toprule
							& Training &\multicolumn{3}{c}{Testing Shots} & Average \\ 
							Model &  Shots & 1 & 5 & 10 & Accuracy \\
							\midrule
							ProtoNet & 1 & 47.37 $\%$ & 63.70 $\%$ & 67.99 $\%$  & 61.85 $\%$ \\
							ProtoNet & 5 & 42.33 $\%$ & 66.51 $\%$ & \textbf{72.05} $\%$ & 64.05 $\%$ \\
							ProtoNet & 10 & 35.38 $\%$ & 64.56 $\%$ & 71.03 $\%$ & 61.24 $\%$ \\
							Mixed-$k$ Shot & 1-10 & 47.67 $\%$ & 66.34 $\%$& 70.96 $\%$& 64.33 $\%$ \\
							PCA ProtoNet & 5  & \textbf{48.34} $\%$ & \textbf{67.07} $\%$ & \textbf{71.65} $\%$ & 64.96 $\%$ \\
							\midrule
							EST ProtoNet & 5 & \textbf{48.85} $\%$ & \textbf{67.24} $\%$ & \textbf{72.09} $\%$ & \textbf{65.46} $\%$ \\
							\bottomrule
					\end{tabular}}
				\end{subtable}
				\begin{subtable}{0.49\textwidth}
					\centering
					\captionsetup{width=.9\linewidth}
					\subcaption{\textit{tiered}ImageNet-5-way, with 7 layer ResNet.}
					\resizebox{\textwidth}{!}{\begin{tabular}{lccccr}
							\toprule
							& Training &\multicolumn{3}{c}{Testing Shots} & Average \\ 
							Model &  Shots & 1 & 5 & 10 & Accuracy \\
							\midrule
							ProtoNet & 1 & 49.78 $\%$ & 65.17 $\%$ & 69.88 $\%$  & 63.98 $\%$ \\
							ProtoNet & 5 & 47.88 $\%$ & \textbf{69.12} $\%$ & 73.80 $\%$ & 66.99 $\%$ \\
							ProtoNet & 10 & 40.86 $\%$ & \textbf{69.37} $\%$ & \textbf{74.72} $\%$ & 65.95 $\%$ \\
							Mixed-$k$ Shot & 1-10 & 50.74 $\%$ & \textbf{69.28}  $\%$& 73.01 $\%$& 66.95 $\%$ \\
							PCA ProtoNet & 5  & 51.21 $\%$ & 68.88 $\%$ & 72.17 $\%$ & 67.14 $\%$ \\
							\midrule
							EST ProtoNet & 5 & \textbf{53.05} $\%$ & \textbf{69.30} $\%$ & \textbf{73.63} $\%$ & \textbf{67.91} $\%$ \\
							\bottomrule
					\end{tabular}}
				\end{subtable}
				\vskip 0.10in
			\end{sc}
		\end{small}
	\end{center}
	\vskip -0.12in
	\vspace{-4mm}
\end{table*}
\vspace{-1mm}

\textbf{EST performance results:}	Table \ref{main-table} summarizes the performance of the evaluated methods on all data sets. Due to space constraints, only 1-shot, 5-shot, 10-shot and 1-10 shot average performance are included. Additional results are in the appendix. The best performing method in each evaluation is in bold. On Omniglot, there is no significant difference in performance between the best performing vanilla ProtoNet and any other methods. We attribute this to the already high accuracy of the baseline model. On \textit{mini}ImageNet and \textit{tiered}ImageNet, EST-ProtoNet significantly outperforms baseline methods and PCA-protonet in terms of average accuracy over test runs with different shots.

We observe that matching the training shot to the test shot generally provides the best performance for vanilla ProtoNets. Also importantly, training with a mixture of different values of $k$ does not provide optimal performance when evaluated on the same mixture of $k$ values. Instead, the resulting performance is mediocre in all test shots. ProtoNet-EST provides minor improvements over the best-performing baseline method under most test shots settings. We hypothesize that this is due to EST aligning the embedding space to the directions with high inter-class variance and low intra-class variance. Comparison against the direct PCA approach demonstrates that the performance uplift is not entirely attributed to reducing the dimensions of the embedding space. 

In conclusion, EST improves the performance of ProtoNets on the more challenging data sets when evaluated with various test shots. It successfully tackles performance degradation when testing shots and training shots are different. This improvement is vital to the deployment of ProtoNets in real world scenarios where the number of support samples cannot be determined in advance.

\section{Related work}
\RB{We summarize related work on extensions of ProtoNets, on improving the few-shot classification setup, and on analyzing theoretical properties of meta-learning methods.}
\RB{\paragraph{Extensions of ProtoNets:} \citet{IMProtonet} build upon ProtoNets by allowing each class to be represented by multiple prototypes, thereby improving the representation power of ProtoNets. \citet{TADAM} use a context-conditioned embedding network to produce prototypes that are aware of the other classes. These prior works assume matched training and testing shots whereas our work focuses on setups where testing shots are not fixed (\ie not necessarily the same as training shots). Our work is parallel to these works in that EST can be applied on the embeddings learned by these methods.}
\RB{\paragraph{Improvement of few-shot classification setup:} \citet{Chen2019ACL} extend the few-shot learning problem setup by considering domain adaptation in addition to learning novel classes. Specifically, they look at how well models trained on \textit{mini}ImageNet can perform on few-shot learning in CUB200. Importantly, they still force the number of shots to be consistent between training time and testing time. While their work deals with varying the domain of the episodes at test time, our work deals with varying shots.
Concurrent to our work, \citet{meta_datasets} further broaden the scope of few-shot learning by introducing a benchmark composed of data from various domains; methods are tested on their ability to adapt to different domains and deal with class imbalance. We extend their work with a thorough analysis of how the number of shots affects the learning outcome, and further propose a method to overcome the negative impact of mismatched shots.}
\RB{\paragraph{Theoretical analysis of few-shot learning:} Despite the myriad of methodological improvements, theoretical work on few-shot learning has been sparse. \citet{FSLsurvey} provide a unifying formulation for few-shot learning methods, and clearly outline the key challenge in few-shot learning through a PAC argument, but do not introduce any new theoretical results. In contrast, our work introduces a novel bound for the accuracy of ProtoNets; this bound provides useful intuitions pertaining to how ProtoNets adapt to few-shot episodes. Additionally, we demonstrate theoretically and experimentally that the intrinsic dimension of the embedding function's output space varies with the number of shots as a direct consequence of the challenges outlined in the PAC argument. 
To the best of our knowledge, \citet{MetaPAC} provide the only prior work to bound the error of a meta-learning agent. Specifically, they use the generalized PAC-Bayes framework to derive an error-rate bound for a MAML-style learning algorithm (where the hypothesis class is fixed). Their main result relates the performance of the learning algorithm to both the number of tasks encountered during meta-training and the number of shots given in any task. In contrast to our work, their result does not apply to non-parametric methods such as ProtoNets because the hypothesis class in ProtoNets can change from episode to episode depending on the number of ways.}

\section{Conclusion and future work}
We have explored how the number of support samples used during meta-training can influence the learned embedding function's performance and intrinsic dimensions. Our proposed method transforms the embedding space to maximize inter-to-intra class variance ratio while constraining the dimensions of the space itself. In terms of applications, our method can be combined other works \citep{TADAM, LEA, LEO, FewshotSeg, fewshotssl, Tapaswi_2019_ICCV} with an embedding learning component. 
We believe our approach is a significant step to reduce the impact of the shot number in meta-training, which is a crucial hyperparameter for few-shot classification. 

\subsubsection*{Acknowledgments} We acknowledge partial support from NSERC COHESA NETGP485577-15 and Samsung. We thank Chaoqi Wang for discussion on the initial idea, and Cl\'{e}ment Fuji Tsang, Mark Brophy and the anonymous reviewers for helpful feedback on early versions of this paper.

\bibliography{DR}
\bibliographystyle{iclr2020_conference}

	\newpage
	\appendix
	
	\section{Appendix}
	\subsection{Algorithm for EST}
	Below is the exact procedure for computing $\mathcal{T}$ for embedding space transformation.
	\begin{algorithm}[ht]
		\caption{Algorithm for computing the transformation $\mathcal{T}$. \\ $L_n$ is the number of samples belonging to class $n$; $\mu_n$ and $\Sigma_n$ are the mean and covariance of the embeddings of that class; $\mu_T$ and $\overline{\Sigma}_s$ are the average of mean embeddings and covariances; $\Sigma_{\mu}$ is the covariance of the mean embeddings; $V^*$ is the matrix of eigenvectors that correspoinds to the $d$ largest eigenvalues in $\Lambda$.}
		\label{alg:computeT}
		\begin{algorithmic}
			\STATE {\bfseries Input:} Training set $\mathcal{D}_{tr} = \{(\mathbf{x}_1, y_1), ..., (\mathbf{x}_M, y_M\}$, where $y_i \in \{1, ..., N\}$, $\mathcal{D}_n$ denotes the subset of $\mathcal{D}_{tr}$ where $\forall y \in \mathcal{D}_n, y = n$, embedding function $\phi$, weighting parameter $\rho$, dimension parameter $d$. 
			\STATE {\bfseries Output:} Transformation $\mathcal{T}$.
			\STATE Initialize $\mu = \{0\}^N_n$, $\Sigma = \{0\}^N_n$,
			\FOR{$n=1$ {\bfseries to} $N$}
			\STATE $\mu[n] = \mu_n= \frac{1}{L_n} \sum_{(\mathbf{x}_i, y_i) \in \mathcal{D}_n} \phi(\mathbf{x}_i)$
			\STATE $\Sigma[n] = \frac{1}{L_n} \sum_{(\mathbf{x}_i, y_i)\in \mathcal{D}_n} (\phi(\mathbf{x}_i) - \mu_n) (\phi(\mathbf{x}_i) - \mu_n)^T$
			\ENDFOR
			\STATE $\mu_{T} = \frac{1}{M} \sum_{(\mathbf{x}_i, y_i) \in \mathcal{D}_{tr}} \phi(\mathbf{x}_i)$
			\STATE $\Sigma_{\mu} = \frac{1}{N} \sum_{n \in [1,N]} (\mu[n] - \mu_{T})(\mu[n] - \mu_{T})^T$
			\STATE $\overline{\Sigma}_s = \frac{1}{N} \sum_{n \in [1,N]} \Sigma[n]$
			\STATE Find $V$, $\Lambda$ such that: $\Sigma_{\mu} - \rho\overline{\Sigma}_s = V \Lambda V^T$
			\STATE $V^* = [\mathbf{v}_j]$ for $j \in$ top d of $\Lambda$
			\STATE $\mathcal{T}(z) = V^* (z)$
		\end{algorithmic}
	\end{algorithm}
	
	\subsection{Network architecture}
	The vanilla CNN has the exact same architecture as the original ProtoNet \citep{protonet}. It consists of four convolution layers with depth of 64; each convolution layer is followed by Relu activation, max-pooling, and batch normalization \citep{Batchnorm}. Resnet of 7 layers is constructed with one vanilla convolution layer of depth 64 followed by three residual blocks, all joined by max-pooling layers; each residual block consists of two sets of conv-batchnorm-Relu layers, of depth 128-256-256. 
	
	\subsection{Data set description and pre-processing}
	Experiments are performed on three data sets: Omniglot \citep{omniglot}, \textit{mini}ImageNet \citep{matchnet}, and \textit{tiered}ImageNet \citep{fewshotssl}. For Omniglot experiments, we follow the same configuration as in the original paper where 1200 classes augmented with rotations (4800 total) are used for training, and the remaining classes are used for testing. \par 
	For \textit{mini}ImageNet experiments, we use the splits proposed by \citep{metalstm} where 64 classes are used for training, 16 for validation, and 20 for testing. Mirroring the original paper, we resize all \textit{mini}ImageNet images to 84x84. No data augmentation is applied. As most state-of-art few-shot classification methods achieve saturating accuracies on Omniglot, and \textit{mini}ImageNet's small number of classes make claims about generalization difficult, we also conduct experiments of \textit{tiered}ImageNet. \par 
	\textit{Tiered}ImageNet is also a subset of Imagenet1000. \textit{Tiered}ImageNet groups classes into broader categories corresponding to higher-level nodes in the ImageNet hierarchy. It includes 34 categories, with each category containing between 10 and 30 classes. These are split into 20 training, 6 validation and 8 testing categories. In total, there are 351 classes in training, 97 in validation, and 160 in testing. Preprocessing of images follow the same steps as used for \textit{mini}ImageNet.
	
	\subsection{ProtoNet training}
	Training procedure of ProtoNets largely mirrors the protocol used by \cite{protonet}. On Omniglot, we train the network to convergence after 30000 episodes. On \textit{mini}ImageNet and \textit{tiered}ImageNet, we monitor the performance of the network on the validation set and select the best performing checkpoint after training for 50000 episodes. Adam \citep{Adam} optimizer is used with $\alpha = 0.9$, $\beta = 0.999$, $\epsilon = 10^{-8}$, and an initial learning rate of $0.001$ that is decayed by half every 2000 episodes. On Omniglot, we train with 60 classes and 5 query points per episode. On \textit{mini}ImageNet and \textit{tiered}ImageNet, we train with 20 classes and 15 query points per episode. \par
	
	\subsection{Derivation Details}
	
	Proof of Lemma \ref{lemma1}:
	\begin{proof}
	First, from the definition of $\alpha$, we split $\mathbb{E}_{\mathbf{x},S|a,b}[\alpha]$ in to two parts and examine them separately:
		\begin{equation}
		\mathbb{E}_{\mathbf{x},S|a,b}[\alpha] =
		\underbrace{\EXP{}{\norm{\phix-\Proto{b}}^2}}_{i}-\underbrace{\EXP{}{\norm{\phix-\Proto{a}}^2}}_{ii}.
		\end{equation}
	In general, for random vector $X$, the expectation of the quadratic form is $\EXP{}{\norm{X}^2} = \mathrm{Tr}(\mathrm{Var}(X)) + \EXP{}{X}^T\EXP{}{X}$. Hence,
		\begin{align}
		i &= \EXP{\mathbf{x},S|a,b}{\norm{\phi(\mathbf{x})-\overline{\phi(S_{b})}}^2} \\
		&= \mathrm{Tr}(\Sigma_{\phi(\mathbf{x})-\Proto{b}}) + \EXP{}{\phix-\Proto{b}}^T \EXP{}{\phix-\Proto{b}},
		\end{align}
	where the first term inside the trace can be expanded as:
		\begin{align}
		\Sigma_{\phi(\mathbf{x})-\Proto{b}} &= \mathrm{Var}[\phix - \Proto{b}] \\
		&= \EXP{}{(\phix - \Proto{b})(\phix - \Proto{b})^T} - (\bmu_a - \bmu_b)(\bmu_a - \bmu_b)^T \\
		&= \Sigma_c + \bmu_a\bmu_a^T + \frac{1}{k}\Sigma_c + \bmu_b\bmu_b^T - \bmu_a\bmu_b^T - \bmu_b\bmu_a^T - (\bmu_a - \bmu_b)(\bmu_a - \bmu_b)^T \\
		&= (1+\frac{1}{k})\Sigma_c \quad \text{(Last terms cancel out)}.
		\end{align}
	To go from (18) to (19), we note that $\mathrm{Var}(X) = \EXP{}{XX^T} - \EXP{}{X}\EXP{}{X}^T$ and $\Sigma_c \overset{\Delta}{=} \mathrm{Var}(\phix)$. Hence (19) can be obtained by expanding out the first term and taking the expectation of each resulting item.\\
	The second term of (16) is simply:
		\begin{align}
		\EXP{\mathbf{x}, S|a,b}{\phix-\Proto{b}} &= \bmu_a - \bmu_b .
		\end{align}
	Putting them together:
		\begin{align}
		i &= (1+\frac{1}{k})\mathrm{Tr}(\Sigma_c) + (\bmu_a - \bmu_b)^T(\bmu_a - \bmu_b).
		\end{align}
	Similarly for $ii$:
		\begin{align}
		ii &= \EXP{\mathbf{x}, S|a,b}{\norm{\phix-\Proto{a}}^2} \\
		&=  \mathrm{Tr}(\Sigma_{\phi(\mathbf{x})-\Proto{a}}) + \EXP{}{\phix-\Proto{a}}^T \EXP{}{\phix-\Proto{a}} \\
		&= (1+\frac{1}{k})\mathrm{Tr}(\Sigma_c).
		\end{align}
	Putting together $i$ and $ii$:
		\begin{align}
		\EXP{\mathbf{x}, S|a,b}{\alpha} &= (1+\frac{1}{k})\mathrm{Tr}(\Sigma_c) + (\bmu_a - \bmu_b)(\bmu_a - \bmu_b)^T -  (1+\frac{1}{k})\mathrm{Tr}(\Sigma_c) \\
		&= (\bmu_a - \bmu_b)^T(\bmu_a - \bmu_b)
		\end{align}
	Then, since $\EXP{a,b,\mathbf{x}, S}{\alpha} = \EXP{a,b}{\EXP{\mathbf{x}, S|a,b}{\alpha}}$, we have:
		\begin{align}
		\EXP{a,b,\mathbf{x}, S}{\alpha} &= \EXP{a,b}{(\bmu_a - \bmu_b)^T(\bmu_a - \bmu_b)} \\
		&= \EXP{a,b}{\bmu_a^T\bmu_a + \bmu_b^T\bmu_b - \bmu_a^T\bmu_b - \bmu_b^T\bmu_a}\\
		&= \mathrm{Tr}(\Sigma) + \bmu^T\bmu + \mathrm{Tr}(\Sigma)+\bmu^T\bmu - 2 \bmu^T\bmu \\
		&= 2\mathrm{Tr}(\Sigma)
		\end{align}
	Where from the second to the third line, we note that $\bmu_a^T\bmu_a$ and $\bmu_b^T\bmu_b$ are quadratic forms while $\bmu_a^T\bmu_b$ describe a dot product between two independent randomly drawn samples which has expectation $\bmu^T\bmu$.
	\end{proof}
For proof of Lemma \ref{lemma2}, we first re-state the result on quadratic forms of normally distributed random vectors by \cite{LinearModels}.
\begin{theorem}\label{quadform}
	Consider random vector $\mathrm{y} \sim N(\mathbf{\bmu}, \Sigma)$ and symmetric matrix of constants $A$, we have:
	\begin{equation*}
	\mathrm{Var}(y^TAy) = 2\mathrm{Tr}((A\Sigma)^2) + 4\mathbf{\bmu}^T A \Sigma A \mathbf{\bmu}.
	\end{equation*}
\end{theorem}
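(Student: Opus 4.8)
The plan is to prove this by centering the Gaussian and exploiting the vanishing of the odd-order moments. First I would write $y = \bmu + z$ with $z \sim N(\mathbf{0}, \Sigma)$, so that, using the symmetry of $A$,
\begin{equation*}
y^T A y = \bmu^T A \bmu + 2\bmu^T A z + z^T A z.
\end{equation*}
Since $\bmu^T A \bmu$ is a constant, it contributes nothing to the variance, so it suffices to compute $\mathrm{Var}(2\bmu^T A z + z^T A z)$. Expanding this into the two individual variances plus twice their covariance reduces the problem to three subcomputations.

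The linear term is immediate: $\mathrm{Var}(2\bmu^T A z) = 4\bmu^T A \, \mathrm{Var}(z) \, A^T \bmu = 4\bmu^T A \Sigma A \bmu$, again using symmetry of $A$ together with $\mathrm{Var}(z) = \Sigma$. The cross term vanishes, because $\mathrm{Cov}(2\bmu^T A z, z^T A z)$ reduces to a third-order moment of the zero-mean Gaussian $z$ (one factor of $z$ from the linear part times two from the quadratic part), and every odd-order central moment of a Gaussian is zero. This leaves $\mathrm{Var}(z^T A z)$ as the only substantial piece.

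For that piece I would write $z^T A z = \sum_{i,j} A_{ij} z_i z_j$, giving $\mathbb{E}[z^T A z] = \sum_{i,j} A_{ij}\Sigma_{ij} = \mathrm{Tr}(A\Sigma)$, together with
\begin{equation*}
\mathbb{E}[(z^T A z)^2] = \sum_{i,j,k,l} A_{ij} A_{kl} \, \mathbb{E}[z_i z_j z_k z_l].
\end{equation*}
The key step is to evaluate the fourth moment via Isserlis' (Wick's) theorem, which for a zero-mean Gaussian gives $\mathbb{E}[z_i z_j z_k z_l] = \Sigma_{ij}\Sigma_{kl} + \Sigma_{ik}\Sigma_{jl} + \Sigma_{il}\Sigma_{jk}$. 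Substituting and contracting indices (using symmetry of both $A$ and $\Sigma$), the first pairing yields $(\mathrm{Tr}(A\Sigma))^2$ while each of the remaining two pairings yields $\mathrm{Tr}((A\Sigma)^2)$, so that
\begin{equation*}
\mathbb{E}[(z^T A z)^2] = (\mathrm{Tr}(A\Sigma))^2 + 2\mathrm{Tr}((A\Sigma)^2),
\end{equation*}
and hence $\mathrm{Var}(z^T A z) = 2\mathrm{Tr}((A\Sigma)^2)$. Adding the linear contribution gives the claim.

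The main obstacle is the fourth-moment bookkeeping: correctly applying Isserlis' theorem and matching each of the three Wick pairings to the right trace expression, which requires careful index manipulation and repeated use of the symmetry of $A$ and $\Sigma$. Everything else, including the centering and the cancellation of the cross term, is routine.
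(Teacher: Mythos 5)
Your proof is correct. Every step checks out: the centering $y=\bmu+z$ and the expansion using symmetry of $A$; the linear-term variance $\mathrm{Var}(2\bmu^TAz)=4\bmu^TA\Sigma A\bmu$; the vanishing of the cross covariance (since $\mathbb{E}[2\bmu^TAz]=0$, it equals a linear combination of third-order central moments of a zero-mean Gaussian, all zero); and the Isserlis computation, in which the pairing $\Sigma_{ij}\Sigma_{kl}$ contracts to $\mathrm{Tr}(A\Sigma)^2$ while each of $\Sigma_{ik}\Sigma_{jl}$ and $\Sigma_{il}\Sigma_{jk}$ contracts to $\mathrm{Tr}((A\Sigma)^2)$, giving $\mathrm{Var}(z^TAz)=2\mathrm{Tr}((A\Sigma)^2)$. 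Be aware, however, that there is no in-paper proof to compare against: the authors explicitly \emph{re-state} this theorem as a known result on quadratic forms of normally distributed vectors, citing their linear-models reference, and then use it as a black box in the proof of Lemma 2. Your derivation therefore supplies a self-contained argument for a step the paper delegates to the literature. Compared with the standard textbook routes --- diagonalizing $\Sigma^{1/2}A\Sigma^{1/2}$ to reduce $y^TAy$ to a weighted sum of independent noncentral $\chi^2$ variables, or differentiating the moment generating function of the quadratic form --- your Wick-pairing approach is more elementary (no spectral theorem, no MGF machinery) at the cost of the index bookkeeping you flagged, whereas the chi-square decomposition buys the full distribution and all higher cumulants rather than just the variance.
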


Proof of Lemma \ref{lemma2}:
	\begin{proof}
		\begin{align}
		\mathrm{Var}(\alpha|a,b) &= \mathrm{Var}(\norm{\phix - \Proto{b}}^2-\norm{\phix - \Proto{b}}^2) \\
		&= \mathrm{Var}(\norm{\phix - \Proto{b}}^2) + \mathrm{Var}(\norm{\phix - \Proto{a}}^2) \\
		& \quad - 2 \mathrm{Cov}(\norm{\phix - \Proto{a}}^2, \norm{\phix - \Proto{b}}^2) \\
		&\leq \mathrm{Var}(\norm{\phix - \Proto{b}}^2) + \mathrm{Var}(\norm{\phix - \Proto{a}}^2) \\
		& \quad + 2 \sqrt{\mathrm{Var}(\norm{\phix - \Proto{b}}^2)\mathrm{Var}(\norm{\phix - \Proto{a}}^2)} \\
		&\leq 2\mathrm{Var}(\norm{\phix - \Proto{b}}^2) + 2\mathrm{Var}(\norm{\phix - \Proto{a}}^2)
		\end{align}
		From 33 to 35, we used Cauchy Schwarz inequality. From line 35 to line 37, we use the fact that $2ab\leq a^2+b^2$ for all $a,b\in \mathcal{R}^+$.\\
	By applying Theorem \ref{quadform}, we have:
		\begin{align*}
		\mathrm{Var}(\norm{\phix - \Proto{b}}^2) &= 2(1+\frac{1}{k})^2\mathrm{Tr}(\Sigma_c^2) + 4(1+\frac{1}{k})(\bmu_a - \bmu_b)^T\Sigma_c (\bmu_a - \bmu_b) \\
		\mathrm{Var}(\norm{\phix - \Proto{a}}^2) &= 2(1+\frac{1}{k})^2\mathrm{Tr}(\Sigma_c^2)
		\end{align*}
		Finally,
		\begin{align*}
		\EXP{a,b}{\mathrm{Var}(\alpha|a,b)} &\leq \EXP{a,b}{2\mathrm{Var}(\norm{\phix - \Proto{b}}^2) + 2\mathrm{Var}(\norm{\phix - \Proto{a}}^2)} \\
		&= \EXP{a,b}{8(1+\frac{1}{k})^2\mathrm{Tr}(\Sigma_c^2) + 8(1+\frac{1}{k})(\bmu_a - \bmu_b)^T\Sigma_c (\bmu_a - \bmu_b)} \\
		&= 8(1+\frac{1}{k})\EXP{a,b}{\mathrm{Tr}\{(1+\frac{1}{k})\Sigma_c^2 + \Sigma_c (\bmu_a - \bmu_b)(\bmu_a - \bmu_b)^T\}} \\
		&= 8(1+\frac{1}{k})\mathrm{Tr}\{\Sigma_c [(1+\frac{1}{k})\Sigma_c + 2\Sigma]\}
		\end{align*}
	\end{proof}

Extending to $N$ class:
Let $\mathbf{x}, y$ denote the query data pair, and the set of $N$ classes be denoted as $\mathbf{c}$. Let $\alpha_i = \norm{\phi(\mathbf{x})-\overline{\phi(S_{i})}}^2-\norm{\phi(\mathbf{x})-\overline{\phi(S_y)}}^2$. Then we have a correct prediction $\hat{y} = y$ if $\forall i \in [1, N], i \neq y, \space \alpha_i > 0$ . Hence: $R(\phi) = \Pr_{\mathbf{c},\mathbf{x},S}(\bigcup_{\substack{i=1 \\ i\neq y}}^N \alpha_i > 0)$

By Frechet's inequality:
\begin{equation*}
    R(\phi) \geq \sum_{\substack{i=1 \\ i\neq y}}^N \Pr(\alpha_i > 0) - (N-2)
\end{equation*}
Noting that Theorem \ref{thm1} can be applied to each term in the summation:
\begin{equation*}
    R(\phi) \geq \sum_{\substack{i=1 \\ i\neq y}}^N \frac{4 \mathrm{Tr}(\Sigma)^2}{8(1+1/k)^2\mathrm{Tr}(\Sigma_c^2) + 16(1+1/k)\mathrm{Tr}(\Sigma\Sigma_c) + \mathbf{E}_{i,y}[((\boldsymbol{\upmu}_y-\boldsymbol{\upmu}_i)(\boldsymbol{\upmu}_y-\boldsymbol{\upmu}_i)^T)^2]} - (N-2)
\end{equation*}
It is then clear that the observations made on the binary case also applies to the multiclass case.

\subsection{Additional Results}

Additionally, we experimented with directly setting the output dimension of the embedding network to 60 by adding a fully connected layer to the embedding network. This variant of protonet performs worse than both the base variant and all other methods. 
\begin{table*}[h]
	\caption{Classification Accuracy on \textit{mini}ImageNet-5-way, with 4 layer CNN + 1 Fully connected layer embedding network.}
	\label{miniimagenet-table-mlp}
	\begin{center}
		\begin{small}
			\begin{sc}
				\resizebox{\textwidth}{!}{\begin{tabular}{lcccccccccccccr}
						\toprule
						& Training &  \multicolumn{10}{c}{Testing Shots} & Average\\ 
						Model & Shots & 1 & 2 & 3 & 4 & 5 & 6 & 7 & 8 & 9 & 10 &  Accuracy \\
						\midrule
						ProtoNet + FC & 5 & 44.77 $\%$ & 53.75 $\%$ & 58.04 $\%$ & 61.06 $\%$ & 62.26 $\%$ & 64.60 $\%$ & 65.19 $\%$ & 66.63 $\%$ & 66.65 $\%$ & 67.52 $\%$& 61.05 $\pm$ 0.28 $\%$ \\
						\bottomrule
				\end{tabular}}
				
			\end{sc}
		\end{small}
	\end{center}
\end{table*}

\begin{figure}[ht]
		\centering
		\begin{small}
			\begin{subfigure}[b]{.49\linewidth}
				\centering
				\captionsetup{width=.9\linewidth}
				\includegraphics[width=0.9\linewidth]{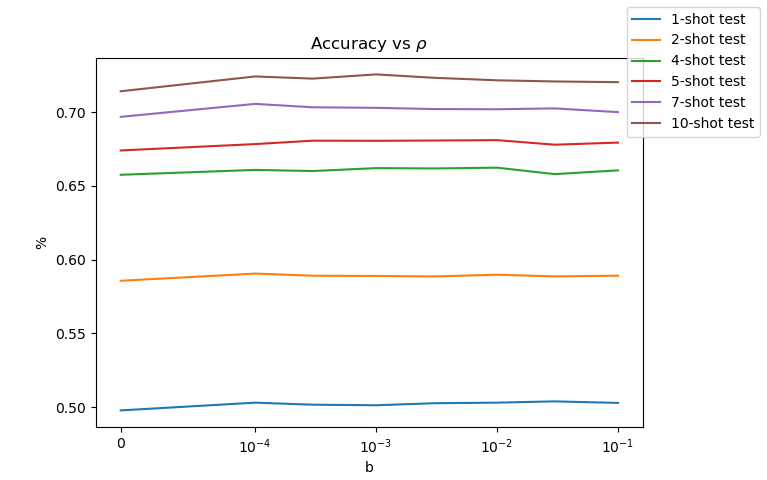}
			\end{subfigure}%
			\begin{subfigure}[b]{.49\linewidth}
				\centering
				\captionsetup{width=.9\linewidth}
				\includegraphics[width=0.9\linewidth]{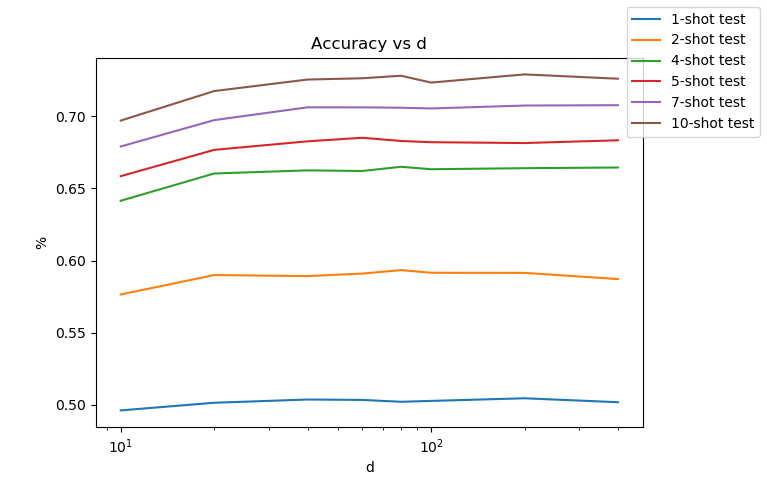}
			\end{subfigure}
			\captionsetup{font=footnotesize}
			\captionsetup{justification=centering}
			\caption{Effect of hyperparameters on k-shot testing performance on \textit{mini}ImageNet.}
			\label{fig:hyperparams}
		\end{small}
\end{figure}

\begin{figure}[ht]
		\centering
		\begin{small}
			\begin{subfigure}[b]{1.0\linewidth}
				\centering
				\captionsetup{width=.9\linewidth}
				\includegraphics[width=0.9\linewidth]{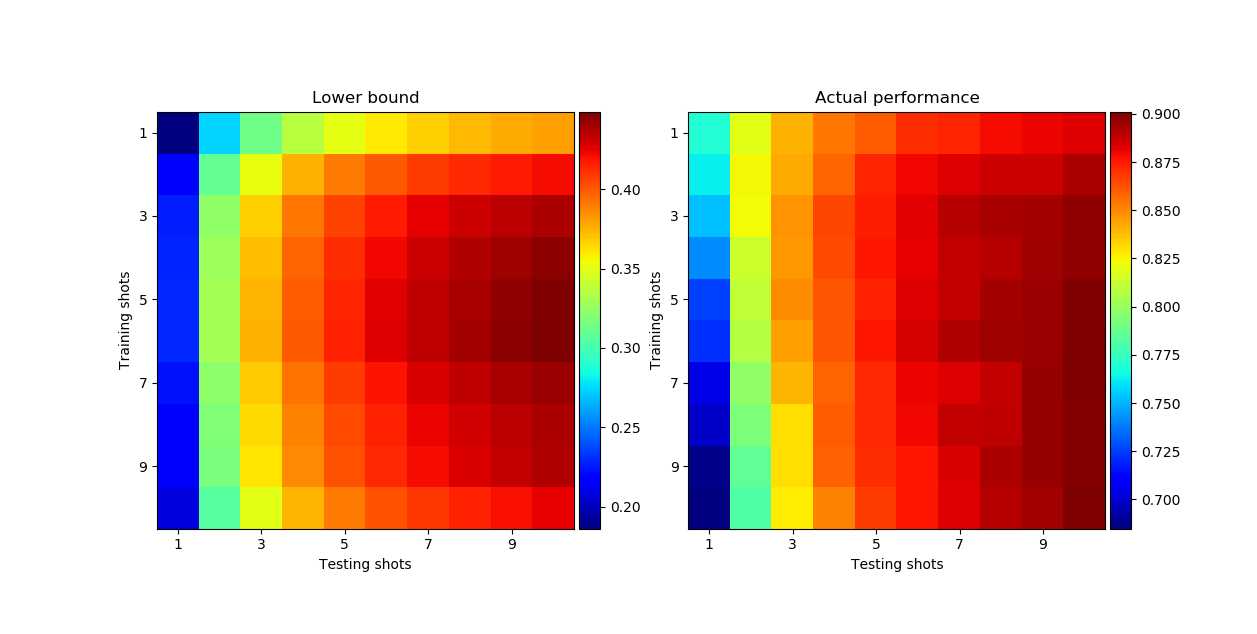}
			\end{subfigure}%
			\captionsetup{font=footnotesize}
			\captionsetup{justification=centering}
			\caption{Comparison between estimated accuracy lower bound and empirical accuracy for various training and test shots. Experiment is 2-way few-shot classification performed on \textit{mini}ImageNet.}
			\label{fig:lower_bound}
		\end{small}
\end{figure}

\begin{table*}[h]
	\caption{Classification Accuracy on \textit{mini}ImageNet-5-way, with 4 layer CNN embedding network.}
	\label{miniimagenet-table}
	\begin{center}
		\begin{small}
			\begin{sc}
				\resizebox{\textwidth}{!}{\begin{tabular}{lcccccccccccccr}
						\toprule
						& Training &  \multicolumn{10}{c}{Testing Shots} & Average\\ 
						Model & Shots & 1 & 2 & 3 & 4 & 5 & 6 & 7 & 8 & 9 & 10 &  Accuracy \\
						\midrule
						Vanilla ProtoNet & 1 & 48.89 $\%$ & 56.54 $\%$ & 60.31 $\%$ & 63.12 $\%$ & 64.70 $\%$ & 66.02 $\%$ & 66.62 $\%$ & 67.99 $\%$ & 68.41 $\%$ & 68.90 $\%$& 63.15 $\pm$ 0.21 $\%$ \\
						EST ProtoNet & 1 & 49.07 $\%$ & 56.49 $\%$ & 60.62 $\%$ & 62.67 $\%$ & 64.83 $\%$ & 66.23 $\%$ & 66.84 $\%$ & 67.90 $\%$ & 67.68 $\%$ & 68.73 $\%$& 63.11 $\pm$ 0.22 $\%$ \\
						PCA ProtoNet & 1 & 49.01 $\%$ & 56.35 $\%$ & 60.07 $\%$ & 62.42 $\%$ & 64.38 $\%$ & 65.28 $\%$ & 66.56 $\%$ & 67.81 $\%$ & 67.59 $\%$ & 68.26 $\%$& 62.77 $\pm$ 0.22 $\%$ \\ 
						\midrule
						Vanilla ProtoNet & 5 & 44.75 $\%$ & 56.61 $\%$ & 61.52 $\%$ & 65.32 $\%$ & 67.23 $\%$ & 69.04 $\%$ & 70.66 $\%$ & 71.47 $\%$ & 71.84 $\%$ & 72.36 $\%$& 65.08 $\pm$ 0.23 $\%$ \\
						EST ProtoNet & 5 & 50.22 $\%$ & 59.04 $\%$ & 64.14 $\%$ & 66.61 $\%$ & 68.25 $\%$ & 69.46 $\%$ & 70.80 $\%$ & 71.60 $\%$ & 72.61 $\%$ & 73.29 $\%$& 66.60 $\pm$ 0.23 $\%$ \\
						PCA ProtoNet & 5 & 48.72 $\%$ & 58.43 $\%$ & 63.17 $\%$ & 66.07 $\%$ & 68.63 $\%$ & 69.56 $\%$ & 70.55 $\%$ & 71.21 $\%$ & 72.09 $\%$ & 72.82 $\%$& 66.12 $\pm$ 0.24 $\%$ \\
						\midrule
						Vanilla ProtoNet & 10 & 39.99 $\%$ & 52.73 $\%$ & 59.71 $\%$ & 63.41 $\%$ & 66.23 $\%$ & 68.27 $\%$ & 69.86 $\%$ & 71.03 $\%$ & 71.72 $\%$ & 72.47 $\%$& 63.54 $\pm$ 0.25 $\%$ \\
						EST ProtoNet & 10 & 48.98 $\%$ & 57.83 $\%$ & 63.13 $\%$ & 66.39 $\%$ & 68.12 $\%$ & 69.82 $\%$ & 70.63 $\%$ & 71.85 $\%$ & 72.79 $\%$ & 73.22 $\%$& 66.28 $\pm$ 0.23 $\%$ \\
						PCA ProtoNet & 10 & 48.04 $\%$ & 57.05 $\%$ & 62.46 $\%$ & 64.63 $\%$ & 67.61 $\%$ & 68.98 $\%$ & 69.71 $\%$ & 71.78 $\%$ & 71.75 $\%$ & 72.42 $\%$& 65.44 $\pm$ 0.24 $\%$ \\
						Vanilla ProtoNet & 1-10 & 49.36 $\%$ & 58.67 $\%$ & 62.77 $\%$ & 65.76 $\%$ & 67.96 $\%$ & 69.20 $\%$ & 70.05 $\%$ & 70.90 $\%$ & 71.34 $\%$ & 72.27 $\%$& 65.83 $\pm$ 0.21 $\%$ \\
						\bottomrule
				\end{tabular}}
				
			\end{sc}
		\end{small}
	\end{center}
	\vskip -0.1in
\end{table*}

\begin{table*}[h]
	\caption{Classification Accuracy on \textit{tiered}ImageNet-5-way, with 4 layer CNN embedding network.}
	\label{tieredimagenet-table}
	\begin{center}
		\begin{small}
			\begin{sc}
				\resizebox{\textwidth}{!}{\begin{tabular}{lcccccccccccccr}
						\toprule
						& Training &  \multicolumn{10}{c}{Testing Shots} & Average\\ 
						Model & Shots & 1 & 2 & 3 & 4 & 5 & 6 & 7 & 8 & 9 & 10 &  Accuracy \\
						\midrule
						Vanilla ProtoNet & 1 & 47.37 $\%$ & 55.72 $\%$ & 59.27 $\%$ & 61.50 $\%$ & 63.70 $\%$ & 64.28 $\%$ & 65.01 $\%$ & 66.43 $\%$ & 67.20 $\%$ & 67.99 $\%$& 61.85 $\pm$ 0.29 $\%$ \\
						EST ProtoNet & 1 & 47.71 $\%$ & 55.05 $\%$ & 59.31 $\%$ & 61.97 $\%$ & 63.48 $\%$ & 64.65 $\%$ & 65.35 $\%$ & 66.33 $\%$ & 66.42 $\%$ & 67.44 $\%$& 61.77 $\pm$ 0.31$\%$ \\
						PCA ProtoNet & 1 & 47.72 $\%$ & 54.60 $\%$ & 58.69 $\%$ & 60.73 $\%$ & 62.78 $\%$ & 64.92 $\%$ & 65.88 $\%$ & 65.88 $\%$ & 66.11 $\%$ & 66.87 $\%$& 61.42 $\pm$ 0.32$\%$ \\
						\midrule
						Vanilla ProtoNet & 5 & 42.33 $\%$ & 55.06 $\%$ & 61.32 $\%$ & 64.57 $\%$ & 66.51 $\%$ & 67.86 $\%$ & 69.33 $\%$ & 70.15 $\%$ & 71.32 $\%$ & 72.05 $\%$& 64.05 $\pm$ 0.33 $\%$ \\
						EST ProtoNet & 5 & 48.85 $\%$ & 58.38 $\%$ & 62.75 $\%$ & 65.16 $\%$ & 67.24 $\%$ & 68.39 $\%$ & 69.89 $\%$ & 70.99 $\%$ & 70.87 $\%$ & 72.09 $\%$& 65.46 $\pm$ 0.32 $\%$ \\
						PCA ProtoNet & 5 & 48.34  $\%$ & 57.44 $\%$ & 0.79 $\%$ & 64.96 $\%$ & 67.07 $\%$ & 67.93 $\%$ & 69.08 $\%$ & 70.40  $\%$ & 70.38$\%$ & 71.65 $\%$& 64.96 $\pm$ 0.31 $\%$ \\
						\midrule
						Vanilla ProtoNet & 10 & 35.38 $\%$ & 49.40 $\%$ & 56.76 $\%$ & 61.25 $\%$ & 64.56 $\%$ & 66.56 $\%$ & 68.05 $\%$ & 69.31 $\%$ & 70.12 $\%$ & 71.03 $\%$& 61.24 $\pm$ 0.36 $\%$ \\
						EST ProtoNet & 10 & 47.33 $\%$ & 56.75 $\%$ & 62.80 $\%$ & 65.78 $\%$ & 66.84 $\%$ & 69.07 $\%$ & 69.98 $\%$ & 70.82 $\%$ & 71.99 $\%$ & 71.20 $\%$& 65.26 $\pm$ 0.32 $\%$ \\
						PCA ProtoNet & 10 & 46.55 $\%$ & 56.61 $\%$ & 60.81 $\%$ & 64.01 $\%$ & 66.53 $\%$ & 67.70 $\%$ & 69.18 $\%$ & 69.83 $\%$ & 70.62 $\%$ & 71.22 $\%$& 64.31 $\pm$ 0.33 $\%$ \\
						Vanilla ProtoNet & 1-10 & 47.65 $\%$ & 56.23 $\%$ & 62.12 $\%$ & 63.93 $\%$ & 66.34 $\%$ & 67.94 $\%$ & 68.44 $\%$ & 68.93 $\%$ & 70.80 $\%$ & 70.96 $\%$& 64.33 $\pm$ 0.30 $\%$ \\
						\bottomrule
				\end{tabular}}
				
			\end{sc}
		\end{small}
	\end{center}
	\vskip -0.1in
\end{table*}
		
\begin{table*}[h]
	\caption{Classification Accuracy on \emph{Omniglot-20-way}, with 4 layer CNN embedding network.}
	\label{omniglot-table2}
	\begin{center}
		\begin{small}
			\begin{sc}
				\resizebox{\textwidth}{!}{\begin{tabular}{lccccccr}
						\toprule
						& &\multicolumn{5}{c}{Testing Shots} & \\ 
						Model & Training Shots & 1 & 2 & 3 & 4 & 5 & Average Accuracy \\
						\midrule
						Vanilla ProtoNet & 1 & 95.07 $\pm$ 0.17 $\%$ & 97.89 $\pm$ 0.09 $\%$ & 98.45 $\pm$ 0.08 $\%$ & 98.75 $\pm$ 0.06 $\%$ & 98.89 $\pm$ 0.06 $\%$ & 97.81 $\pm$ 0.06 $\%$ \\
						Vanilla ProtoNet & 2 & 94.59 $\pm$ 0.18 $\%$ & 97.69 $\pm$ 0.09 $\%$ & 98.44 $\pm$ 0.07 $\%$ & 98.69 $\pm$ 0.06 $\%$ & 98.89 $\pm$ 0.06 $\%$ & 97.66 $\pm$ 0.07 $\%$ \\
						Vanilla ProtoNet & 3 & 94.19 $\pm$ 0.18 $\%$ & 97.57 $\pm$ 0.09 $\%$ & 98.30 $\pm$ 0.07 $\%$ & 98.63 $\pm$ 0.07 $\%$ & 98.79 $\pm$ 0.06 $\%$ & 97.50 $\pm$ 0.07 $\%$ \\
						Vanilla ProtoNet & 4 & 93.79 $\pm$ 0.18 $\%$ & 97.41 $\pm$ 0.10 $\%$ & 98.19 $\pm$ 0.08 $\%$ & 98.54 $\pm$ 0.07 $\%$ & 98.75 $\pm$ 0.06 $\%$ & 97.34 $\pm$ 0.07 $\%$ \\
						Vanilla ProtoNet & 5 & 93.42 $\pm$ 0.18 $\%$ & 97.34 $\pm$ 0.10 $\%$ & 98.18 $\pm$ 0.07 $\%$ & 98.53 $\pm$ 0.07 $\%$ & 98.78 $\pm$ 0.05 $\%$ & 97.25 $\pm$ 0.07 $\%$ \\
						Mixed-$k$ Shot & 1-5 & 94.84 $\pm$ 0.17 $\%$ & 97.81 $\pm$ 0.09 $\%$ & 98.45 $\pm$ 0.07 $\%$ & 98.70 $\pm$ 0.06 $\%$ & 98.92 $\pm$ 0.54 $\%$& 97.74 $\pm$ 0.06 $\%$ \\
						PCA ProtoNet & 1  & 94.94 $\pm$ 0.16 $\%$ & 97.81 $\pm$ 0.09 $\%$ & 98.53 $\pm$ 0.07 $\%$ & 98.79 $\pm$ 0.06 $\%$ & 98.85 $\pm$ 0.06 $\%$ & 97.78 $\pm$ 0.06 $\%$ \\
						EST ProtoNet & 1 & 95.11 $\pm$ 0.17 $\%$ & 97.95 $\pm$ 0.09 $\%$ & 98.46 $\pm$ 0.07 $\%$ & 98.77 $\pm$ 0.06 $\%$ & 98.84 $\pm$ 0.06 $\%$ & 97.83 $\pm$ 0.06 $\%$ \\
						\bottomrule
				\end{tabular}}
			\end{sc}
		\end{small}
	\end{center}
	\vskip -0.1in
\end{table*}
		
\begin{table*}[h]
	\caption{Classification Accuracy on \textit{mini}ImageNet-5-way, with 7 layer ResNet embedding network.}
	\label{miniimagenet-table}
	\begin{center}
		\begin{small}
			\begin{sc}
				\resizebox{\textwidth}{!}{\begin{tabular}{lcccccccccccccr}
						\toprule
						& Training &  \multicolumn{10}{c}{Testing Shots} & Average\\ 
						Model & Shots & 1 & 2 & 3 & 4 & 5 & 6 & 7 & 8 & 9 & 10 &  Accuracy \\
						\midrule
						Vanilla ProtoNet & 1 & 52.65 $\%$ & 60.46 $\%$ & 64.18 $\%$ & 66.83 $\%$ & 68.27 $\%$ & 69.23 $\%$ & 70.19 $\%$ & 71.37 $\%$ & 71.83 $\%$ & 72.29 $\%$& 66.73 $\pm$ 0.20 $\%$ \\
						EST ProtoNet & 1 & 52.56 $\%$ & 60.63 $\%$ & 64.50 $\%$ & 66.53 $\%$ & 68.33 $\%$ & 69.36 $\%$ & 70.04 $\%$ & 71.21 $\%$ & 71.37 $\%$ & 71.60 $\%$& 66.61 $\pm$ 0.22 $\%$ \\
						PCA ProtoNet & 1 & 52.78 $\%$ & 60.35 $\%$ & 64.05 $\%$ & 66.24 $\%$ & 67.51 $\%$ & 69.64 $\%$ & 69.85 $\%$ & 71.13 $\%$ & 71.88 $\%$ & 71.79 $\%$& 66.52 $\pm$ 0.22 $\%$ \\
						\midrule
						Vanilla ProtoNet & 5 & 47.40 $\%$ & 58.23 $\%$ & 64.60 $\%$ & 67.52 $\%$ & 69.93 $\%$ & 71.05 $\%$ & 72.27 $\%$ & 72.90 $\%$ & 73.55 $\%$ & 74.35 $\%$& 67.18 $\pm$ 0.23 $\%$ \\
						EST ProtoNet & 5 & 51.93 $\%$ & 60.70 $\%$ & 65.33 $\%$ & 68.06 $\%$ & 69.98 $\%$ & 71.26 $\%$ & 72.33 $\%$ & 73.46 $\%$ & 74.03 $\%$ & 74.80 $\%$& 68.19 $\pm$ 0.23 $\%$ \\
						PCA ProtoNet & 5 & 50.90 $\%$ & 60.38 $\%$ & 64.66 $\%$ & 67.23 $\%$ & 69.25 $\%$ & 71.30 $\%$ & 71.72 $\%$ & 72.75 $\%$ & 73.84 $\%$ & 74.24 $\%$& 67.63 $\pm$ 0.23 $\%$ \\
						\midrule
						Vanilla ProtoNet & 10 & 42.20 $\%$ & 55.98 $\%$ & 61.67 $\%$ & 66.08 $\%$ & 68.23 $\%$ & 69.95 $\%$ & 72.20 $\%$ & 72.56 $\%$ & 74.04 $\%$ & 74.54 $\%$& 65.75 $\pm$ 0.25 $\%$ \\
						EST ProtoNet & 10 & 51.24 $\%$ & 60.46 $\%$ & 65.11 $\%$ & 67.63 $\%$ & 70.06 $\%$ & 71.07 $\%$ & 72.55 $\%$ & 73.39 $\%$ & 73.85 $\%$ & 74.69 $\%$& 68.00 $\pm$ 0.23 $\%$ \\
						PCA ProtoNet & 10 & 50.52 $\%$ & 59.20 $\%$ & 64.44 $\%$ & 66.86 $\%$ & 69.36 $\%$ & 71.00 $\%$ & 71.73 $\%$ & 73.19 $\%$ & 73.73 $\%$ & 73.82 $\%$& 67.38 $\pm$ 0.23 $\%$ \\
						Vanilla ProtoNet & 1-10 & 51.74 $\%$ & 60.10 $\%$ & 65.13 $\%$ & 67.12 $\%$ & 69.09 $\%$ & 70.53 $\%$ & 71.57 $\%$ & 72.22 $\%$ & 72.99 $\%$ & 73.63 $\%$& 67.41 $\pm$ 0.21 $\%$ \\
						\bottomrule
				\end{tabular}}
				
			\end{sc}
		\end{small}
	\end{center}
	\vskip -0.1in
\end{table*}
\begin{table*}[h]
	\caption{Classification Accuracy on \textit{tiered}ImageNet-5-way, with 7 layer ResNet embedding network.}
	\label{miniimagenet-table}
	\begin{center}
		\begin{small}
			\begin{sc}
				\resizebox{\textwidth}{!}{\begin{tabular}{lcccccccccccccr}
						\toprule
						& Training &  \multicolumn{10}{c}{Testing Shots} & Average\\ 
						Model & Shots & 1 & 2 & 3 & 4 & 5 & 6 & 7 & 8 & 9 & 10 &  Accuracy \\
						\midrule
						Vanilla ProtoNet & 1 & 49.78 $\%$ & 57.67 $\%$ & 61.37 $\%$ & 64.88 $\%$ & 65.17 $\%$ & 66.73 $\%$ & 67.53 $\%$ & 68.58 $\%$ & 68.21 $\%$ & 69.88 $\%$& 63.98 $\pm$ 0.29 $\%$ \\
						EST ProtoNet & 1 & 51.19 $\%$ & 57.61 $\%$ & 61.85 $\%$ & 64.43 $\%$ & 65.48 $\%$ & 67.64 $\%$ & 67.69 $\%$ & 67.76 $\%$ & 68.51 $\%$ & 68.20 $\%$& 64.04 $\pm$ 0.30 $\%$ \\
						PCA ProtoNet & 1 & 51.38 $\%$ & 57.89 $\%$ & 61.68 $\%$ & 64.32 $\%$ & 65.96 $\%$ & 66.15 $\%$ & 66.83 $\%$ & 68.00 $\%$ & 68.58 $\%$ & 68.77 $\%$& 63.95 $\pm$ 0.30 $\%$ \\
						\midrule
						Vanilla ProtoNet & 5 & 47.88 $\%$ & 58.70 $\%$ & 63.95 $\%$ & 66.58 $\%$ & 69.12 $\%$ & 71.69 $\%$ & 71.95 $\%$ & 73.15 $\%$ & 73.11 $\%$ & 73.80 $\%$& 66.99 $\pm$ 0.31 $\%$ \\
						EST ProtoNet & 5 & 53.05 $\%$ & 61.13 $\%$ & 65.67 $\%$ & 68.07 $\%$ & 69.30 $\%$ & 71.47 $\%$ & 71.59 $\%$ & 72.42 $\%$ & 72.80 $\%$ & 73.63 $\%$& 67.91 $\pm$ 0.31 $\%$ \\
						PCA ProtoNet & 5 & 51.21 $\%$ & 59.86 $\%$ & 63.91 $\%$ & 66.92 $\%$ & 68.88 $\%$ & 70.38 $\%$ & 71.48 $\%$ & 72.77 $\%$ & 73.81 $\%$ & 72.17 $\%$& 67.14 $\pm$ 0.32 $\%$ \\
						\midrule
						Vanilla ProtoNet & 10 & 40.84 $\%$ & 56.24 $\%$ & 62.10 $\%$ & 66.28 $\%$ & 69.37 $\%$ & 71.43 $\%$ & 72.09 $\%$ & 72.91 $\%$ & 73.56 $\%$ & 74.72 $\%$& 65.95 $\pm$ 0.35 $\%$ \\
						EST ProtoNet & 10 & 50.45 $\%$ & 60.41 $\%$ & 65.19 $\%$ & 68.46 $\%$ & 69.55 $\%$ & 70.87 $\%$ & 72.07 $\%$ & 72.66 $\%$ & 73.78 $\%$ & 74.79 $\%$& 67.82 $\pm$ 0.32 $\%$ \\
						PCA ProtoNet & 10 & 50.18 $\%$ & 59.59 $\%$ & 64.24 $\%$ & 67.43 $\%$ & 69.52 $\%$ & 70.65 $\%$ & 71.78 $\%$ & 72.10 $\%$ & 72.78 $\%$ & 73.65 $\%$& 67.19 $\pm$ 0.31 $\%$ \\
						Vanilla ProtoNet & 1-10 & 50.74 $\%$ & 60.03 $\%$ & 64.17 $\%$ & 67.26 $\%$ & 69.28 $\%$ & 69.56 $\%$ & 71.07 $\%$ & 72.41 $\%$ & 71.99 $\%$ & 73.01 $\%$& 66.95 $\pm$ 0.29 $\%$ \\
						\bottomrule
				\end{tabular}}
				
			\end{sc}
		\end{small}
	\end{center}
	\vskip -0.1in
\end{table*}

\begin{table*}[h]
	\caption{Classification Accuracy on \emph{Omniglot-20-way}, with 7 layer ResNet embedding network.}
	\label{omniglot-table2}
	\begin{center}
		\begin{small}
			\begin{sc}
				\resizebox{\textwidth}{!}{\begin{tabular}{lccccccr}
						\toprule
						& &\multicolumn{5}{c}{Testing Shots} & \\ 
						Model & Training Shots & 1 & 2 & 3 & 4 & 5 & Average Accuracy \\
						\midrule
						Vanilla ProtoNet & 1 & 96.46 $\%$ & 98.39 $\%$ & 98.82 $\%$ & 99.01 $\%$ & 99.07 $\%$& 98.35 $\pm$ 0.05 $\%$ \\
						Vanilla ProtoNet & 2 & 95.85 $\%$ & 98.32 $\%$ & 98.80 $\%$ & 98.95 $\%$ & 99.07 $\%$& 98.20 $\pm$ 0.05 $\%$ \\
						Vanilla ProtoNet & 3 & 95.35 $\%$ & 98.15 $\%$ & 98.73 $\%$ & 98.91 $\%$ & 99.03 $\%$& 98.03 $\pm$ 0.06 $\%$ \\
						Vanilla ProtoNet & 4 & 95.00 $\%$ & 98.05 $\%$ & 98.62 $\%$ & 98.90 $\%$ & 98.99 $\%$& 97.91 $\pm$ 0.06 $\%$ \\
						Vanilla ProtoNet & 5 & 94.42 $\%$ & 97.98 $\%$ & 98.60 $\%$ & 98.77 $\%$ & 98.99 $\%$& 97.75 $\pm$ 0.06 $\%$ \\
						Vanilla ProtoNet & 1-5 & 96.53 $\%$ & 98.53 $\%$ & 98.90 $\%$ & 99.06 $\%$ & 99.15 $\%$& 98.43 $\pm$ 0.05 $\%$ \\
						EST ProtoNet & 1 & 96.18 $\%$ & 98.23 $\%$ & 98.68 $\%$ & 98.87 $\%$ & 98.99 $\%$& 98.19 $\pm$ 0.05 $\%$ \\
						PCA ProtoNet & 1 & 96.02 $\%$ & 98.22 $\%$ & 98.76 $\%$ & 98.93 $\%$ & 99.02 $\%$& 98.19 $\pm$ 0.05 $\%$ \\
						\bottomrule
				\end{tabular}}
			\end{sc}
		\end{small}
	\end{center}
	\vskip -0.1in
\end{table*}
\end{document}